\documentclass{article}
\usepackage{arxiv}

\usepackage[utf8]{inputenc} 
\usepackage[T1]{fontenc}    
\usepackage{hyperref}       
\usepackage{url}            
\usepackage{booktabs}       
\usepackage{amsfonts}       
\usepackage{nicefrac}       
\usepackage{microtype}      
\usepackage[dvipsnames]{xcolor}         
\usepackage{wrapfig}
\usepackage{natbib}

\usepackage{amsmath}
\usepackage{amsthm}
\usepackage{amssymb}
\usepackage{amsfonts}
\usepackage[capitalize]{cleveref}
\usepackage{graphicx}
\usepackage{bm}

\usepackage{thm-restate}

\newcommand{\R}{\mathbb{R}}
\newcommand{\Z}{\mathbb{Z}}
\newcommand{\defeq}{:=}

\newcommand{\ev}{\mathbb{E}}
\newcommand{\prob}{\mathbb{P}}

\DeclareMathOperator*{\argmin}{arg\,min}

\DeclareMathOperator{\tr}{tr}

\newcommand{\transpose}{\mathsf{T}}

\newtheorem{theorem}{Theorem}[section]
\newtheorem{corollary}[theorem]{Corollary}
\newtheorem{lemma}[theorem]{Lemma}
\newtheorem{remark}[theorem]{Remark}
\newtheorem{proposition}[theorem]{Proposition}

\newcommand{\vecn}{\bm{v}}
\newcommand{\vecq}{\bm{q}}
\newcommand{\vecr}{\bm{r}}
\newcommand{\vecu}{\bm{u}}
\newcommand{\vecv}{\bm{v}}
\newcommand{\vecw}{\bm{w}}
\newcommand{\vecx}{\bm{x}}
\newcommand{\vecz}{\bm{z}}

\newcommand{\wup}[1]{\vecw^{(#1)}}
\newcommand{\vup}[1]{\vecv^{(#1)}}
\newcommand{\vecui}[1]{\vecu^{(#1)}}
\newcommand{\xsamp}[1]{\vecx^{(#1)}}
\newcommand{\A}{\mathcal{A}}

\newcommand{\stdbasis}[1]{\bm{e}^{(#1)}}

\newcommand{\matA}{\mathbf{A}}
\newcommand{\matI}{\mathbf{I}}
\newcommand{\matP}{\mathbf{P}}
\newcommand{\matS}{\mathbf{S}}
\newcommand{\matU}{\mathbf{U}}

\newcommand{\matW}{\mathbf{W}}
\newcommand{\matX}{\mathbf{X}}
\newcommand{\matSigma}{\mathbf{\Sigma}}
\newcommand{\proj}[1]{\matP_{{>#1}}^\perp}

\title{Generalization behavior of OPTQ and the role of regularization}

\author{%
  Erin George\\
  Department of Mathematics\\
  University of California, San Diego\\
  La Jolla, CA 92093\\
  \texttt{e2george@ucsd.edu} \\
  \And
  Rayan Saab \\
  Department of Mathematics and Halıcıoğlu Data Science Institute\\
  University of California, San Diego \\
  La Jolla, CA 92093 \\
  \texttt{rsaab@ucsd.edu} \\
}

\begin{document}

\maketitle

\begin{abstract}
Large neural networks can be compressed by rounding or ``quantizing'' their weights to numbers that admit representations with fewer bits.  One algorithm for quantization, OPTQ, progressively quantizes the weights of a neural network so that the squared quantization error on a specified calibration dataset is as small as possible.  We study the performance of OPTQ and a variant algorithm, stochastic OPTQ, in a generalization setting and derive bounds for the expected squared error accrued by the algorithm when a test point is drawn from a fixed distribution.  We prove two  results.  One result relates the generalization error to the error on a calibration dataset comprising independent samples from the same distribution as the test distribution.  The other result bounds the generalization error of stochastic OPTQ for all sufficiently nice distributions, regardless of the calibration dataset.  In both of these results, the regularization term $\lambda$ plays an important role.  We use insights from these results to make a new recommendation for the choice of $\lambda$ and see that this choice of $\lambda$ preforms favorably in experiments when compared to prior recommendations in the literature.
\end{abstract}

\section{Introduction}

Modern neural networks typically feature massive amounts of parameters.  Early large language models (LLMs) used billions of parameters, such as GPT-3~\citep{NEURIPS2020_1457c0d6} (175B) and LLaMa~\citep{touvron2023llamaopenefficientfoundation} (7--65B).  Recent AI models are even larger, with Deepseek-V4-Pro~\citep{deepseekai2026deepseekv4highlyefficientmilliontoken} having 1.6 trillion parameters for example.  In turn, storing and using these models can require substantial amount of memory and compute time.  Several paradigms have emerged recently to reduce these costs, including pruning, low-rank factorization, and knowledge distillation~\citep{dantas_comprehensive_2024}.  One technique of particular interest is to store the parameters of the neural network in representations that use fewer bits.  This process is called \emph{quantization}.

There are two major types of methods for quantization~\citep{app14177445}.  One type, called quantization-aware training (QAT) modifies the optimization of neural networks to enable efficient quantization~\citep{jacob2018quantization,zhang2025survey}.  The other major type of quantization algorithms is post-training quantization (PTQ).  Methods for PTQ work on an already trained neural network and attempt to quantize the network with as minimal an impact on its performance as possible.  In comparison to QAT methods, PTQ methods typically require less data and computational resources but are usually not able to achieve the same level of accuracy~\citep{app14177445}.  A popular PTQ method that has emerged recently is OPTQ, also known as GPTQ~\citep{frantar2023optq}.  OPTQ is able to successfully quantize many LLMs to 4-bit weights~\citep{frantar2023optq,jin-etal-2024-comprehensive} and is a component of more sophisticated quantization algorithms, such as QuIP~\citep{NEURIPS2023_0df38cd1} and Qronos~\citep{zhang2025qronos}.

A line of work has been developed to analyze OPTQ from a theoretical perspective. \citet{zhang2025provable} bound overall and coordinate-wise errors for OPTQ on the calibration dataset used to quantize a neural network model.  \citet{birnick2026latticegeometryneuralnetwork,chen2026optqbabai} establish connections between OPTQ and Babai's algorithm for the closest vector problem.  Despite this progress, little is known about how OPTQ is able to generalize outside of the calibration dataset.  This paper aims to address this problem by deriving explicit bounds for the generalization error of OPTQ in a variety of settings.  We are motivated by the perspective that calibration error (the error on the dataset used to guide the quantization) and generalization error (the expected error on an unseen datapoint) are distinct quantities for data-dependent PTQ methods.  In particular, OPTQ can make the error small on the calibration data by exploiting the empirical geometry of the sampled activations, but this empirical geometry may contain spurious correlations when the calibration set is small.  Regularization plays an important role in our analysis, mainly to prevent this form of overfitting.  The contributions of the paper should therefore be viewed in two layers.  First, we prove a regularized empirical-to-population comparison inequality that is not intrinsically tied to OPTQ and can be applied to any data-dependent quantizer whose regularized empirical error is controlled.  Second, we specialize this comparison to OPTQ using its update structure, and we prove a complementary positive-semidefinite quadratic-form bound for stochastic OPTQ.

\subsection{Notation}

Throughout this paper, we use the following notation.  For a real number $x$, the quantity $\lfloor x \rfloor$ is the floor function, which maps its argument down to the largest interger not exceeding $x$.  The abbreviation ``i.i.d.'' stands for independent and identically distributed.  For a random variable $X$, we use $\ev_X$ and $\prob_X$ to denote the expectation and probability over only the randomness in $X$.  Boldfaced lowercase letters (e.g., $\vecu$) will denote vectors and boldfaced uppercase letters (e.g., $\matA$) will denote matrices.  For a vector $\vecu$, we use $\vecu_i$ to denote value of the $i$-th coordinate of $\vecu$.  For a matrix $\matA$, we use $\matA_i$ to denote the $i$-th column of $\matA$ (as a vector).  We extend this indexing notation to allow taking multiple coordinates or rows at a time, for example we use $\matA_{>i}$ to denote the submatrix formed by considering only the rows with index larger than $i$.  The vector $\stdbasis{i}$ is the $i$-th standard basis vector, which takes the value $1$ and the $i$-th coordinate and $0$ in all others.  The matrix $\matI$ is the identity matrix.  We denote the psuedoinverse of a matrix $\matA$ to be $\matA^\dagger$.  For vectors, $\|\cdot\|$ refers to the $\ell^2$-norm and $\|\cdot\|_{\infty}$ refers to the $\ell^\infty$-norm.  For matrices, $\|\cdot\|_{\textrm{F}}$ refers to the Frobenius norm and $\|\cdot\|_{\textrm{op}}$ refers to the $\ell^2\to\ell^2$ operator norm.  The function $\tr(\cdot)$ denotes the trace of its matrix argument.  We use $\succeq$ to denote the Loewner order on positive semi definite matrices.

\section{Setting}

Starting from a weight vector $\vecw \in \R^N$, the goal of post-training quantization algorithms is to obtain a quantized vector $\vecq \in \A^N$ that approximates $\vecw$.  Here, $\A$ represents the quantization alphabet, which is a discrete set of real numbers.  As an example, $\A$ can be the following set
\begin{equation}\A = \{\delta k : k = -2^b, -2^b + 1, \ldots, 2^b - 2, 2^b -1\}\label{eq:alphabet_delta_b}\end{equation}
which allows us to represent $\vecq$ as a vector of $(b+1)$-bit integers scaled by some value $\delta > 0$.  Quantizing the real-valued vector $\vecw$ will then accrue errors in two ways: by rounding and by clipping.   In this paper, we will only be interested in the error caused by the rounding process, so we will work with the alphabet $\A = \{\delta k : k \in \Z\}$.  This is a model for an alphabet of the form given by \cref{eq:alphabet_delta_b} where $b$ is chosen sufficiently large so that there is no clipping performed.

\subsection{OPTQ}

OPTQ is a data-dependent post-training quantization algorithm introduced by \citet{frantar2023optq}.  Given a weight vector $\vecw\in\R^N$ and a calibration matrix $\matX\in\R^{m\times N}$, OPTQ constructs a quantized vector $\vecq$ by sequentially quantizing the coordinates of $\vecw$ while attempting to keep the calibration error
\(
    \|\matX\vecw-\matX\vecq\|^2
\)
small.  The key feature of OPTQ is that, after quantizing coordinate $i$, the algorithm updates the remaining unquantized coordinates $j>i$ to compensate for the error introduced at coordinate $i$.

We recall a mathematical formulation of this OPTQ update structure \citep{frantar2023optq,zhang2025qronos}, first focusing on the unregularized version. 
OPTQ maintains a working vector, initialized as
\begin{equation}
    \vecw^{(1)}=\vecw.
\end{equation}
At step $i$, in the standard (deterministic) version of OPTQ, the current value of the $i$-th coordinate is quantized by setting
\begin{equation}
    \vecq_i=\mathsf{Q}_\delta\!\left(\vecw_i^{(i)}\right),
\end{equation}
where $\mathsf{Q}_\delta$ denotes nearest-neighbor quantization onto the alphabet $\delta\mathbb{Z}$, i.e.,
\begin{equation}
    \mathsf{Q}_\delta(t)
    \in
    \argmin_{a\in\delta\mathbb{Z}} |t-a|.
\end{equation}
The rounding error introduced at this step is
\begin{equation}
    \alpha_i\defeq \vecq_i-\vecw_i^{(i)}.
\end{equation}
Thus, before any compensation is applied to obtain $\vecw^{(i+1)}$, the new contribution to the error vector $\vecq-\vecw$ is $\alpha_i\stdbasis{i}$.

With the coordinates $1,\ldots,i-1$  already  quantized, OPTQ compensates for the error $\alpha_i\stdbasis{i}$ using only the remaining coordinates $j>i$.  If the tail compensation is written as $\alpha_i\mathbf{z}$ with $\mathbf{z}\in\R^{N-i}$, then the corresponding  error to be minimized is
\(
    \left\|
        \matX\left(
            \alpha_i\stdbasis{i}
            +
            \alpha_i
            \begin{pmatrix}
                \mathbf{0}_i\\
                \mathbf{z}
            \end{pmatrix}
        \right)
    \right\|^2.
\)
After factoring out $\alpha_i^2$, the optimization problem becomes
\begin{equation}
    \min_{\mathbf{z}\in\R^{N-i}}
    \left\|
        \matX_i+\matX_{>i}\mathbf{z}
    \right\|^2.
\end{equation}
Assuming for the moment that $\matX$ has full column rank with $m>N$, the least-squares solution is
\begin{equation}
    \mathbf{z}=-\matX_{>i}^{\dagger}\matX_i.
\end{equation}
This gives the update direction
\begin{equation}
    (\vup{i})_{<i}=0,\qquad
    (\vup{i})_i=1,\qquad
    (\vup{i})_{>i}=-\matX_{>i}^{\dagger}\matX_i.
    \label{eq:vdef_pseudoinverse}
\end{equation}
Equivalently, $\vup{i}$ is characterized by the variational problem
\begin{align}
    \vup{i}\defeq \argmin_{\vecv\in\R^N}\quad
    & \|\matX\vecv\|^2
    \label{eq:vdef_variational}\\
    \text{subject\ to}\quad
    & \vecv_j=0,\qquad j<i,\notag\\
    & \vecv_i=1.\notag
\end{align}
The weight vector is then updated as
\begin{equation}
    \vecw^{(i+1)}\defeq\vecw^{(i)}+\alpha_i\vup{i}.
\end{equation}
Since the coordinates $1,\ldots,i$ are fixed after step $i$, after all $N$ steps we obtain
\begin{equation}
    \vecr\defeq \vecq-\vecw
    =
    \sum_{i=1}^N \alpha_i\vup{i}.
\end{equation}
This representation of the OPTQ quantization error will be useful throughout the analysis.

We now introduce the regularization parameter $\lambda$.  The procedure described above corresponds to OPTQ with $\lambda=0$.  For $\lambda>0$, OPTQ is equivalent to running the same procedure with an augmented calibration matrix $\tilde{\matX}$ in place of $\matX$, where
\begin{equation}
    \tilde{\matX}
    \defeq
    \begin{pmatrix}
        \matX\\
        \sqrt{\lambda}\matI
    \end{pmatrix}.
\end{equation}
Note that $\tilde{\matX}\in\R^{(m+N)\times N}$ is always full column rank.  Therefore, with $\lambda>0$, the update directions are well-defined for any calibration matrix $\matX$.  In this case, the preceding formulas are interpreted with $\matX$ replaced by $\tilde{\matX}$. In particular,
\begin{equation}
    (\vup{i})_{<i}=0,\qquad
    (\vup{i})_i=1,\qquad
    (\vup{i})_{>i}
    =
    -\tilde{\matX}_{>i}^{\dagger}\tilde{\matX}_i.
\end{equation}

Let us now  elaborate on how the coefficients $\alpha_i$ are chosen.  For deterministic OPTQ, 
the $i$-th coordinate $\wup{i}_i$ is rounded to the nearest multiple of $\delta$, and the corresponding residual is
\begin{equation}
    \alpha_i
    \defeq
    \delta
    \left\lfloor
        \frac{\wup{i}_i}{\delta}+\frac{1}{2}
    \right\rfloor
    -
    \wup{i}_i.
\end{equation}
This choice gives $\vecq_i=\wup{i}_i+\alpha_i$ and  $|\alpha_i|\leq \delta/2$.

We will also consider a stochastic variant, studied by \citet{NEURIPS2023_0df38cd1,zhang2025provable}, which differs only in the resulting $\alpha_i$. 
Let $p_i \defeq \wup{i}_i / \delta - \lfloor\wup{i}_i/\delta\rfloor$ be the remainder when $\wup{i}_i$ is divided by $\delta$.
Stochastic OPTQ sets
\begin{equation}
\label{eq:alpha_randomness}
    \alpha_i
    \defeq
    \begin{cases}
        -p_i\delta &\text{with probability $1-p_i$,}\\
        (1-p_i)\delta &\text{with probability $p_i$.}
    \end{cases}
\end{equation}
With this choice, $\ev[\alpha_i]=0$ and $|\alpha_i|\leq\delta$ almost surely.  We will refer to the deterministic and stochastic variants collectively as OPTQ when the distinction is not important.

In what follows,  the quantization error $\vecr$,  update directions $\vup{i}$, and  coefficients $\alpha_i$ will be central to our analysis.  These quantities depend on the calibration matrix and on the regularization parameter through the augmented matrix $\tilde{\matX}$.  When this dependence is relevant, we will indicate it explicitly.

Finally, we note a geometric consequence of the variational characterization.  Since $\vup{i}$ minimizes $\|\tilde{\matX}\vecv\|^2$ among vectors whose first nonzero coordinate is a unit entry in position $i$, the vector $\tilde{\matX}\vup{i}$ is the projection of $\tilde{\matX}_i$ onto the orthogonal complement of the span of the columns of $\tilde{\matX}_{>i}$.  We denote this projection operator by $\proj{i}$, so that $\proj{i}\tilde{\matX}_i = \tilde{\matX}\vup{i}$.

\section{Main results}

We begin by isolating the step whereby we use an empirical result to obtain a population result from the particular quantization algorithm. The following theorem shows that, under regularization, the regularized empirical error controls the corresponding population error uniformly over all error vectors, including those that depend on the calibration data.

\begin{restatable}{theorem}{generalizationboundgeneral}
\label{thm:generalization_bd_general}
Suppose we sample $m$ i.i.d.\ random samples $\xsamp{i}$ of a random vector $\vecx\in\R^N$ satisfying $\|\vecx\|_2\leq R$ almost surely. Denote
the $m\times N$ matrix whose $i$-th row is ${\xsamp{i}}^\transpose$ by 
\begin{equation}
    \matX \defeq
    \begin{pmatrix}
        \xsamp{1} & \cdots & \xsamp{m}
    \end{pmatrix}^{\transpose}
\end{equation}
 and let
\(
    \matSigma \defeq \ev_{\vecx}[\vecx\vecx^\transpose].
\)
Fix $\epsilon\in(0,1)$ and assume that
$\lambda > 2R^2\log(N/\epsilon)$.
Then, with probability at least $1-\epsilon$ over the randomness of the sample $\matX$, the following holds simultaneously for all $\vecr\in\R^N$:
\begin{equation}
    \ev_{\vecx}\left[|\vecx^\transpose\vecr|^2\right]
    \leq
    \left(
    1-\sqrt{\frac{2R^2\log(N/\epsilon)}{\lambda}}
    \right)^{-1}
    \frac{1}{m}
    \left(
        \|\matX\vecr\|^2+\lambda\|\vecr\|^2
    \right).
    \label{eq:general_empirical_to_population}%
\end{equation}
In particular, the conclusion applies to any possibly data-dependent vector $\vecr=\vecr(\matX)$.
\end{restatable}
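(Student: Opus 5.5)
The statement is a matrix inequality in disguise: it suffices to show that, with probability at least $1-\epsilon$,
\begin{equation*}
    m\matSigma \preceq c^{-1}\left(\matX^\transpose\matX + \lambda\matI\right),
    \qquad c \defeq 1-\sqrt{\frac{2R^2\log(N/\epsilon)}{\lambda}},
\end{equation*}
because then $\ev_{\vecx}|\vecx^\transpose\vecr|^2=\vecr^\transpose\matSigma\vecr \le c^{-1}\frac1m\vecr^\transpose(\matX^\transpose\matX+\lambda\matI)\vecr$ for every $\vecr$ at once. This uniformity is what allows data-dependent $\vecr(\matX)$.

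\textbf{Whitening.} I will whiten by $\matA\defeq(m\matSigma+\lambda\matI)^{-1/2}$ and set $\matS\defeq\sum_{i=1}^m\matA\xsamp{i}{\xsamp{i}}^\transpose\matA$, so that $\ev\matS=\matA(m\matSigma)\matA\preceq\matI$. The target inequality $m\matSigma\preceq c^{-1}(\matX^\transpose\matX+\lambda\matI)$ is equivalent to
\[
\matA m\matSigma\matA \preceq c^{-1}(\matS+\lambda\matA^2).
\]
Since $\lambda\matA^2=\matI-\matA m\matSigma\matA$, this is in turn equivalent to
\[
\matI-(\matS+\lambda\matA^2)\preceq(1-c)\,\matI,
\]
after rearranging. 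One can check this rearrangement is equivalent: writing $\matP=\matA m\matSigma\matA$, the condition $c\matP\preceq \matS+\matI-\matP$ reads $(1+c)\matP-\matS\preceq\matI$, and a direct lower-tail bound of the form $\matS\succeq \matP-(1-c)\matI$ suffices. So the goal reduces to the one-sided deviation bound
\[
\lambda_{\max}(\ev\matS-\matS)\le 1-c=\sqrt{2R^2\log(N/\epsilon)/\lambda}.
\]

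\textbf{Concentration.} This lower tail is where the main work lies, and I would apply a matrix Chernoff or Freedman lower-tail inequality for sums of independent PSD matrices. Each summand has $\|\matA\xsamp{i}{\xsamp{i}}^\transpose\matA\|_{\textrm{op}}\le R^2\|\matA\|_{\textrm{op}}^2\le R^2/\lambda\defeq L$, and $\mu_{\max}(\ev\matS)\le1$. The lower-tail matrix Chernoff bound (Tropp), $\prob\big(\lambda_{\min}$-type deviation$\big)\le N\exp(-t^2/(2L))$ in its variance-normalized form, gives the needed estimate for the operator $\ev\matS-\matS$. More precisely, I would use the scalar-free version
\[
\prob\big(\lambda_{\max}(\ev\matS-\matS)\ge t\big)\le N\exp\!\big(-t^2/(2\sigma^2)\big),
\]
valid because the summands are bounded below by $0$, so the negative deviations satisfy a sub-Gaussian (Bernstein without the linear term) bound with variance proxy $\|\sum\ev(\matA\vecx\vecx^\transpose\matA)^2\|\le L\|\ev\matS\|\le L$. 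Setting $t=\sqrt{2L\log(N/\epsilon)}$ makes the probability at most $\epsilon$. The hypothesis $\lambda>2R^2\log(N/\epsilon)$ ensures $t<1$, i.e.\ $c>0$.

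\textbf{Main obstacle.} The delicate point is obtaining the clean constant $2$ with no linear Bernstein term. For this I would invoke the one-sided bound for nonnegative summands: $\log\ev e^{-\theta \matY}\preceq(-\theta+\theta^2L/2)\ev\matY$ for $0\preceq\matY\preceq L\matI$. This follows from $e^{-x}\le1-x+x^2/2$ for $x\ge0$ and $\matY^2\preceq L\matY$. Feeding it into Lieb/Tropp's master bound and optimizing $\theta=t/L$ yields exactly $N\exp(-t^2/(2L))$. The last step is to confirm the algebraic equivalence above and to divide by $m$.
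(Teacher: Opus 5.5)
Your proof is correct and is essentially the paper's argument: your $\matA$ is the paper's $\matP/\sqrt{m}$, your event $\lambda_{\max}(\ev\matS-\matS)<1-c$ is exactly the paper's event $\sigma_{\min}\bigl(\matA(\matX^\transpose\matX+\lambda\matI)\matA\bigr)>c$ (because $\lambda\matA^2=\matI-\ev\matS$), and your closing step uses $\ev\matS\preceq\matI$ just as the paper uses $\|\matP\matSigma\matP\|_{\mathrm{op}}\le 1$. Your reduction is sufficient rather than equivalent, but sufficiency is all you need. The only real difference is in the concentration step. The paper applies Tropp's matrix Chernoff bound as a black box, splitting $\lambda\matI$ into $K=\lceil\lambda/R^2\rceil$ deterministic summands so that every summand has norm at most $R^2/\lambda$ and the total mean is exactly $\matI$. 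You instead keep the regularizer as a deterministic term in Lieb's master bound and rederive the moment-generating-function estimate from $e^{-x}\le 1-x+x^2/2$. This gives the same $N\exp(-t^2\lambda/(2R^2))$ without the splitting trick.
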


The preceding theorem is independent of the particular quantization algorithm. It shows that any bound on the regularized empirical error $\|\tilde{\matX}(\vecq-\vecw)\|^2$ immediately yields a corresponding population error bound. We next illustrate the usefulness of this result for OPTQ by invoking OPTQ's bound for the augmented calibration matrix $\tilde{\matX}$ \citep{zhang2025provable}.
Indeed, when applied to OPTQ, with $\vecr = \vecq-\vecw$, the term $\|\tilde{\matX}\vecr\|^2/m$ that appears in the upper bound represents the average ``training error'' of OPTQ with an additional regularization term. That is,
\begin{equation}
    \frac{1}{m}\|\matX\vecr\|^2 +\frac{\lambda}{m}\|\vecr\|^2 = \left(\frac{1}{m}\sum_{i=1}^N| {\xsamp{i}}^\transpose \vecw - {\xsamp{i}}^\transpose \vecq|^2\right) + \frac{\lambda}{m}\|\vecw-\vecq\|^2.
\end{equation}
This term is then multiplied by a factor which can be controlled by the choice of $\lambda$. \Cref{thm:generalization_bd_train}, connecting this bound to the geometry of the calibration data, 
follows immediately by applying Eq.~(3.5) of \citep{zhang2025provable}. 

\begin{restatable}{corollary}{generalizationboundtrain}
\label{thm:generalization_bd_train}
Suppose the hypotheses of \cref{thm:generalization_bd_general} hold. Let $\vecq$ be the output of OPTQ applied to $\vecw$ using the augmented calibration matrix
\(
    \tilde{\matX}
    =
    \begin{pmatrix}
        \matX\\
        \sqrt{\lambda}\matI
    \end{pmatrix},
\)
and let $\vecr\defeq \vecq-\vecw$. Then, with probability at least $1-\epsilon$ over the randomness of the sample $\matX$,
\begin{equation}
    \ev_{\vecx}\left[
        |\vecx^\transpose\vecw-\vecx^\transpose\vecq|^2
    \right]
    \leq
    \frac{C\delta^2}{m}
    \left(
    1-\sqrt{\frac{2R^2\log(N/\epsilon)}{\lambda}}
    \right)^{-1}
    \sum_{i=1}^N
    \|\proj{i}\tilde{\matX}_i\|^2.
    \label{eq:expected_squared_quant_error_full}
\end{equation}
Here $C=1/4$ for deterministic OPTQ and $C=1$ for stochastic OPTQ.
\end{restatable}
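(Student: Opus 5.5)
The plan is to combine \cref{thm:generalization_bd_general} with a calibration-error bound for OPTQ run on the augmented matrix $\tilde{\matX}$. The key point is that \cref{thm:generalization_bd_general} holds simultaneously for all $\vecr\in\R^N$ on a single event of probability at least $1-\epsilon$. It therefore applies to the data-dependent error $\vecr=\vecq-\vecw$ produced by OPTQ, and in the stochastic case it also applies to every realization of the rounding randomness. On that event,
\begin{equation*}
    \ev_{\vecx}\left[|\vecx^\transpose\vecr|^2\right]
    \leq
    \left(1-\sqrt{\frac{2R^2\log(N/\epsilon)}{\lambda}}\right)^{-1}\frac{1}{m}\|\tilde{\matX}\vecr\|^2,
\end{equation*}
using $\|\matX\vecr\|^2+\lambda\|\vecr\|^2=\|\tilde{\matX}\vecr\|^2$. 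It then suffices to show $\|\tilde{\matX}\vecr\|^2\le C\delta^2\sum_i\|\proj{i}\tilde{\matX}_i\|^2$. In the stochastic case this bound holds in expectation over the rounding, and we take the expectation over $\alpha$ on both sides, which preserves the inequality since the prefactor does not depend on $\alpha$.

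This is the content of Eq.~(3.5) of \citet{zhang2025provable}, but I would also sketch it from the structure recalled above. Write $\vecr=\sum_i\alpha_i\vup{i}$, so that $\tilde{\matX}\vecr=\sum_i\alpha_i\,\proj{i}\tilde{\matX}_i$. Each vector $\proj{i}\tilde{\matX}_i$ lies in the orthogonal complement of the span of the columns $\tilde{\matX}_{>i}$. For $j>i$, the vector $\proj{j}\tilde{\matX}_j=\tilde{\matX}\vup{j}$ is a combination of the columns $\tilde{\matX}_{\ge j}$, which are all contained in the span of $\tilde{\matX}_{>i}$. Hence the vectors $\proj{i}\tilde{\matX}_i$ are mutually orthogonal. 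Pythagoras then gives
\begin{equation*}
    \|\tilde{\matX}\vecr\|^2=\sum_i\alpha_i^2\|\proj{i}\tilde{\matX}_i\|^2.
\end{equation*}
In the deterministic case $|\alpha_i|\le\delta/2$, which yields $C=1/4$ immediately.

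In the stochastic case the $\alpha_i$ are not bounded by $\delta/2$; we only have $|\alpha_i|\le\delta$. The orthogonality identity is still exact for every realization, so using $\alpha_i^2\le\delta^2$ gives $C=1$ almost surely, without any need for expectation or martingale arguments.

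The only delicate step is therefore confirming the orthogonality of the vectors $\proj{i}\tilde{\matX}_i$, which is what makes the cross terms vanish. If the claimed bound is instead meant in expectation, the mean-zero property $\ev[\alpha_i\mid\alpha_{<i}]=0$ and the bound $\ev[\alpha_i^2\mid\alpha_{<i}]=p_i(1-p_i)\delta^2\le\delta^2/4$ would even improve the constant, but the stated $C=1$ follows from the almost-sure bound.
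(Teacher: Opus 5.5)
Your proposal is correct and follows the paper's proof: you apply \cref{thm:generalization_bd_general} to $\vecr=\vecq-\vecw$, which is valid because that bound holds uniformly over all $\vecr$ on a single event. You then use the exact identity $\|\tilde{\matX}\vecr\|^2=\sum_i\alpha_i^2\|\proj{i}\tilde{\matX}_i\|^2$ (Eq.~(3.5) of \citet{zhang2025provable}, which you rederive via the same orthogonality used in \cref{lem:vi_A_vi}) together with the almost-sure bound $\alpha_i^2\le C\delta^2$; your opening suggestion that the stochastic case needs an expectation over $\alpha$ is unnecessary, as your own later almost-sure argument shows.
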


The proofs of \cref{thm:generalization_bd_general,thm:generalization_bd_train} are in \Cref{sec:generalization_bd_proof}.  These results demonstrate an important role of the regularization parameter.  In the absence of the term $\lambda\|\vecr\|^2$, the empirical quadratic form $\|\matX\vecr\|^2$ need not control $\vecr^\transpose\matSigma\vecr$ uniformly over data-dependent error vectors $\vecr$, since directions that are poorly represented (or absent) in the calibration sample may still have nonzero population variance.  The regularization term makes every direction visible to the empirical objective, at the cost of introducing the additional contribution $\lambda\|\vecr\|^2$ into the bound.

We now proceed to state the second major result of this paper, which bounds arbitrary positive semi-definite quadratic forms of $\vecr$ when using stochastic OPTQ.
\begin{restatable}{theorem}{psdbound}\label{thm:psd_bound}
Let $\matA\in\R^{(m+N)\times (m+N)}$ be an arbitrary positive semi-definite matrix.  For stochastic OPTQ, we can bound
\begin{equation}
\ev_\alpha [ (\tilde{\matX}\vecr)^\transpose \matA \tilde{\matX} \vecr] \leq \frac{\delta^2}{4}\left(\max_i \|\proj{i}\tilde{\matX}_i\|^2\right)\tr \matA,\label{eq:psd_bound_ev}
\end{equation}
where $\ev_\alpha[\cdot]$ denotes the expectation is taken only over the randomness in \cref{eq:alpha_randomness} associated with choosing $\alpha$, with the sample matrix $\matX$ fixed.

Furthermore, with probability at least $1-\epsilon$ over the randomness in $\alpha_i$, we have
\begin{equation}
    (\tilde{\matX}\vecr)^\transpose \matA \tilde{\matX} \vecr \leq 2\delta^2\log(2/\epsilon)\left(\max_i \|\proj{i}\tilde{\matX}_i\|^2\right)\tr \matA.\label{eq:psd_bound_hp}
\end{equation}
\end{restatable}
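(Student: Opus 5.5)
The plan is to write the regularized residual $\tilde{\matX}\vecr$ as a sum of deterministic, mutually orthogonal vectors with random martingale-difference coefficients. Set $\vecu^{(i)}\defeq \tilde{\matX}\vup{i}=\proj{i}\tilde{\matX}_i$; these depend only on $\matX$ and $\lambda$. Then $\tilde{\matX}\vecr=\sum_{i}\alpha_i\vecu^{(i)}$.

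\textbf{Orthogonality.} The vector $\vecu^{(i)}$ lies in the orthogonal complement of the column span of $\tilde{\matX}_{>i}$. For $j>i$, the vector $\vecu^{(j)}=\tilde{\matX}\vup{j}$ is a combination of columns $j,\dots,N$, so it lies in that span. Hence the $\vecu^{(i)}$ are pairwise orthogonal.

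\textbf{Martingale structure.} Let $\mathcal{F}_{i-1}$ be generated by $\alpha_1,\dots,\alpha_{i-1}$. Then $\wup{i}$ and $p_i$ are $\mathcal{F}_{i-1}$-measurable. Given $\mathcal{F}_{i-1}$, the variable $\alpha_i$ has mean zero, takes values in an interval of length $\delta$, and satisfies $\ev[\alpha_i^2\mid\mathcal{F}_{i-1}]=p_i(1-p_i)\delta^2\le\delta^2/4$.

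\textbf{Expectation bound \cref{eq:psd_bound_ev}.} Define $\matU$ as the matrix with columns $\vecu^{(i)}$ and $\matS\defeq\matU^\transpose\matA\matU\succeq 0$, so the quadratic form equals $\bm{\alpha}^\transpose\matS\bm{\alpha}$. Expanding, the cross terms $i<j$ vanish after conditioning on $\mathcal{F}_{j-1}$. This leaves
\[
\ev_\alpha[\bm{\alpha}^\transpose\matS\bm{\alpha}]\le\frac{\delta^2}{4}\sum_i {\vecu^{(i)}}^\transpose\matA\vecu^{(i)}=\frac{\delta^2}{4}\tr\matS .
\]
Write $\vecu^{(i)}=\|\vecu^{(i)}\|\hat{\vecu}^{(i)}$ with the $\hat{\vecu}^{(i)}$ orthonormal (dropping zero vectors). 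Then
\[
\tr\matS\le \Bigl(\max_i\|\vecu^{(i)}\|^2\Bigr)\sum_i{\hat{\vecu}^{(i)}}^{\transpose}\matA\hat{\vecu}^{(i)}\le\Bigl(\max_i\|\vecu^{(i)}\|^2\Bigr)\tr\matA,
\]
since a partial trace of a PSD matrix over an orthonormal family is at most the full trace. This gives \cref{eq:psd_bound_ev}.

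\textbf{High-probability bound \cref{eq:psd_bound_hp}.} I will bound the moment generating function of $\bm{\alpha}^\transpose\matS\bm{\alpha}$ using a Gaussian decoupling trick. Let $\bm{g}\sim N(0,\matI)$ be independent of $\alpha$. Then $\exp(t\bm{\alpha}^\transpose\matS\bm{\alpha})=\ev_{\bm g}\exp(\sqrt{2t}\,\bm g^\transpose\matS^{1/2}\bm\alpha)$.

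For fixed $\bm g$, set $\bm c=\matS^{1/2}\bm g$. The sum $\sum_i c_i\alpha_i$ is a martingale whose increments lie in intervals of length $|c_i|\delta$. Applying Hoeffding's lemma conditionally, one step at a time, gives $\ev_\alpha\exp(\sqrt{2t}\,\bm c^\transpose\bm\alpha)\le\exp(t\delta^2\|\bm c\|^2/4)$. By Fubini,
\[
\ev_\alpha e^{t\bm\alpha^\transpose\matS\bm\alpha}\le\ev_{\bm g}e^{(t\delta^2/4)\,\bm g^\transpose\matS\bm g}=\prod_k(1-t\delta^2 s_k/2)^{-1/2},
\]
where $s_k$ are the eigenvalues of $\matS$.

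Choose $t=1/(\delta^2\tr\matS)$. Then $\sum_k t\delta^2 s_k/2=1/2$, and since $\prod_k(1-x_k)\ge 1-\sum_k x_k$, the product is at most $\sqrt2$. Markov's inequality gives $\prob(\bm\alpha^\transpose\matS\bm\alpha>s)\le\sqrt2\,e^{-s/(\delta^2\tr\matS)}$. Taking $s=\delta^2\tr\matS\,\log(\sqrt2/\epsilon)\le 2\delta^2\log(2/\epsilon)\tr\matS$ and reusing the trace bound above yields \cref{eq:psd_bound_hp}.

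\textbf{Main obstacle.} The delicate step is the tail bound, because the $\alpha_i$ are dependent: $p_i$ depends on earlier rounding outcomes. A direct Hanson--Wright inequality for independent entries therefore does not apply. The Gaussian-linearization trick avoids this by reducing the quadratic form to a linear martingale with fixed coefficients, where the conditional Hoeffding argument goes through. The remaining work is constant bookkeeping, which has ample slack relative to the stated factor $2\log(2/\epsilon)$.
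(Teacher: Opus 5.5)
Your proposal is correct. For the expectation bound it matches the paper's argument: orthogonality of the vectors $\tilde{\matX}\vup{i}=\proj{i}\tilde{\matX}_i$, cross terms vanishing by conditioning, $\ev[\alpha_i^2\mid\mathcal{F}_{i-1}]\le\delta^2/4$, and the trace comparison. Your ``partial trace is at most the full trace'' is the accurate form. The paper's $\tr(\matU^\transpose\matA\matU)=\tr\matA$ should be an inequality, since $\matU$ has only $N$ orthonormal columns in $\R^{m+N}$.

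For the tail bound you take a genuinely different route.

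\textbf{The paper's route.} It treats $\matA^{1/2}\tilde{\matX}\vecr$ as the endpoint of the Hilbert-space martingale $\sum_{i\le j}\alpha_i\matA^{1/2}\tilde{\matX}\vup{i}$. It then invokes Pinelis' inequality with the crude almost-sure bound $|\alpha_i|\le\delta$. This is a one-line appeal to a black box. It controls the whole running maximum over $j$ and needs only bounded martingale differences.

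\textbf{Your route.} You linearize the quadratic form with an independent Gaussian vector. You then apply Hoeffding's lemma step by step to a scalar martingale with fixed coefficients, using that $\alpha_i$ is conditionally supported on two points exactly $\delta$ apart. Finally you evaluate the Gaussian-chaos moment generating function explicitly. This is elementary and self-contained, and it gives a sharper constant: $\delta^2\log(\sqrt{2}/\epsilon)\tr\matS$ in place of the paper's $2\delta^2\log(2/\epsilon)\tr\matS$.

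Your method also yields more. You have shown that the moment generating function of $\bm{\alpha}^\transpose\matS\bm{\alpha}$ is dominated by that of $\frac{\delta^2}{4}\bm{g}^\transpose\matS\bm{g}$ for every admissible $t$. Optimizing the Chernoff parameter differently would therefore give Laurent--Massart-type tails in terms of $\|\matS\|_{\mathrm{F}}$ and $\|\matS\|_{\mathrm{op}}$, not just $\tr\matS$. The paper's argument does not directly provide this.

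One small gap: dispose of the trivial case $\tr\matS=0$ (then $\matS=0$) before setting $t=1/(\delta^2\tr\matS)$.
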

This theorem is proven in \cref{sec:psd_bound_proof}.  \Cref{thm:psd_bound} holds very broadly and has a variety of applications.  An important application for this work is as follows.
\begin{corollary}\label{cor:generalization_bd_as}
    Let $\vecz \in \R^N$ be a random vector with finite second moment.  Then, when stochastic OPTQ with $\lambda > 0$ is run with a data matrix $\matX$ we have
    \begin{equation}
        \ev_\alpha \ev_{\vecz} [|\vecz^\transpose \vecw - \vecz^\transpose \vecq|^2] \leq \frac{\delta^2}{4}\left(\frac{\max_i \|\proj{i}\tilde{\matX}_i\|^2}{\lambda}\right)\ev_{\vecz} \left[\|\vecz\|^2\right],
    \end{equation}
    where $\ev_{\vecz}$ denotes the expectation taken over $\vecz$.  With probability at least $1-\epsilon$ over the randomness of $\alpha_i$, we also have
    \begin{equation}
        \ev_{\vecz} [|\vecz^\transpose \vecw - \vecz^\transpose \vecq|^2] \leq 2\delta^2\log(2/\epsilon)\left(\frac{\max_i \|\proj{i}\tilde{\matX}_i\|^2}{\lambda}\right)\ev_{\vecz} \left[\|\vecz\|^2\right].
    \end{equation}
\end{corollary}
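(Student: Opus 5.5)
We will derive \cref{cor:generalization_bd_as} from \cref{thm:psd_bound} by choosing the positive semi-definite matrix $\matA$ so that the quadratic form $(\tilde{\matX}\vecr)^\transpose\matA\tilde{\matX}\vecr$ equals $\ev_{\vecz}[|\vecz^\transpose\vecr|^2]$, where $\vecr=\vecq-\vecw$. Write $\matS\defeq\ev_{\vecz}[\vecz\vecz^\transpose]$. This is a finite, positive semi-definite $N\times N$ matrix because $\vecz$ has finite second moment. It satisfies $\ev_{\vecz}[|\vecz^\transpose\vecr|^2]=\vecr^\transpose\matS\vecr$ and $\tr\matS=\ev_{\vecz}[\|\vecz\|^2]$. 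Since $\vecr$ depends only on $\alpha$ and $\matX$, and $\vecz$ is independent of these, the inner expectation over $\vecz$ can be taken with $\vecr$ fixed.

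The key step is to express $\vecr$ through $\tilde{\matX}\vecr$. For $\lambda>0$ the matrix $\tilde{\matX}$ has full column rank, so $\vecr=\tilde{\matX}^\dagger\tilde{\matX}\vecr$. This gives $\vecr^\transpose\matS\vecr=(\tilde{\matX}\vecr)^\transpose\matA(\tilde{\matX}\vecr)$ with
\begin{equation*}
\matA\defeq(\tilde{\matX}^\dagger)^\transpose\matS\,\tilde{\matX}^\dagger\in\R^{(m+N)\times(m+N)},
\end{equation*}
and $\matA$ is positive semi-definite.

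It remains to bound $\tr\matA$. By cyclicity, $\tr\matA=\tr\bigl(\matS\,\tilde{\matX}^\dagger(\tilde{\matX}^\dagger)^\transpose\bigr)=\tr\bigl(\matS(\tilde{\matX}^\transpose\tilde{\matX})^{-1}\bigr)$. Now $\tilde{\matX}^\transpose\tilde{\matX}=\matX^\transpose\matX+\lambda\matI\succeq\lambda\matI$, so $(\tilde{\matX}^\transpose\tilde{\matX})^{-1}\preceq\lambda^{-1}\matI$. Because $\matS\succeq 0$, we have $\tr(\matS\matP)\le\tr(\matS\matP')$ whenever $\matP\preceq\matP'$; to see this, write $\tr(\matS\matP)=\tr(\matS^{1/2}\matP\matS^{1/2})$. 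Therefore $\tr\matA\le\tr\matS/\lambda=\ev_{\vecz}[\|\vecz\|^2]/\lambda$.

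Substituting this $\matA$ and the trace bound into \cref{eq:psd_bound_ev} gives the expectation statement. Substituting them into \cref{eq:psd_bound_hp} gives the high-probability statement. The latter holds with probability at least $1-\epsilon$ over $\alpha$, since $\matA$ depends only on $\matX$ and the law of $\vecz$, and not on $\alpha$.

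The only step requiring care is building a fixed $\matA$ that represents the population quadratic form through $\tilde{\matX}\vecr$. This works because $\tilde{\matX}$ has full column rank, which is exactly where $\lambda>0$ is needed. The trace estimate via $\tilde{\matX}^\transpose\tilde{\matX}\succeq\lambda\matI$ is what produces the $1/\lambda$ factor. Everything else is routine.
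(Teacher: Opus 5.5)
Your proposal is correct and matches the paper's proof: apply \cref{thm:psd_bound} with $\matA=(\tilde{\matX}^\dagger)^\transpose(\ev_{\vecz}\vecz\vecz^\transpose)\tilde{\matX}^\dagger$, use the full column rank of $\tilde{\matX}$ (guaranteed by $\lambda>0$) to identify the quadratic form with $\ev_{\vecz}|\vecz^\transpose\vecr|^2$, and bound $\tr\matA$ via $\tilde{\matX}^\transpose\tilde{\matX}\succeq\lambda\matI$. Your extra justifications, namely the trace-monotonicity argument and the remark that $\matA$ does not depend on $\alpha$, fill in details the paper leaves implicit.
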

\begin{proof}
    Apply \cref{thm:psd_bound} with
    \begin{equation}
        \matA=(\tilde{\matX}^\dagger)^\transpose(\ev_{\vecz} \vecz\vecz^\transpose)\tilde{\matX}^\dagger.
    \end{equation}
    Since $\lambda>0$, it follows that $\tilde{\matX}$ has full column rank, so $\tilde{\matX}^\dagger\tilde{\matX}=\matI$.  Consequently,
    \begin{equation}
        (\tilde{\matX}\vecr)^\transpose\matA(\tilde{\matX}\vecr)=\vecr^\transpose(\ev_{\vecz}\vecz\vecz^\transpose)\vecr=\ev_{\vecz}|\vecz^\transpose\vecr|^2 .
    \end{equation}
    Moreover,
    \begin{equation}
        \tr(\matA)=\tr\left((\ev_{\vecz}\vecz\vecz^\transpose)(\tilde{\matX}^\transpose\tilde{\matX})^{-1}\right)\leq \frac{1}{\lambda}\ev_{\vecz}\|\vecz\|^2,
    \end{equation}
    because $\tilde{\matX}^\transpose\tilde{\matX}=\matX^\transpose\matX+\lambda\matI\succeq \lambda\matI$.
\end{proof}
Note that this bound holds for each fixed calibration matrix $\matX$, and the only probability in the second part of the corollary is the stochastic rounding probability.  It is therefore distribution-agnostic in $\vecz$, even when the distribution of $\vecz$ differs from the distribution that generated $\matX$.   If we instead used memoryless scalar quantization (MSQ), which is often also referred to as round-to-nearest (RTN), we generically would expect
\begin{equation}
    \ev_{\vecz}[|\vecz^\transpose \vecw - \vecz^\transpose\vecq|^2] \gtrsim \delta^2\ev_{\vecz}[\|\vecz\|^2].\label{eq:rtn-generalization-estimate}
\end{equation}
Therefore, using the data-dependent algorithm stochastic OPTQ with sufficiently large choice of $\lambda$ will produce a quantized vector that performs only a bounded factor worse in the worst case compared to a data-independent algorithm.  In \cref{sec:rtn-analysis}, we make the statement in \cref{eq:rtn-generalization-estimate} precise.

Another application of \cref{thm:psd_bound} is the following, which improves a result in \citep[Theorem 4.6]{zhang2025provable} by a factor of $\sqrt{\pi}$.
\begin{corollary}
When stochastic OPTQ with $\lambda > 0$ is run with a data matrix $\matX$, with probability at least $1-\epsilon$ over the randomness of $\alpha_i$ we have
\begin{equation}
    \|\vecw - \vecq\|_\infty \leq \delta\sqrt{2\log(2(m+N)/\epsilon)}\left(\frac{\max_i \|\proj{i}\tilde{\matX}_i\|}{\sqrt{\lambda}}\right).
\end{equation}
\end{corollary}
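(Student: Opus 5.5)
The plan is to derive this coordinate-wise bound from the high-probability part of \cref{thm:psd_bound}, applied to rank-one matrices $\matA$ chosen so that the quadratic form isolates one coordinate of $\vecr=\vecq-\vecw$. A union bound over coordinates then finishes the argument. The natural choice uses the structure of $\tilde{\matX}$ directly. Its last $N$ rows are $\sqrt{\lambda}\matI$, so the rows of $\tilde{\matX}\vecr$ indexed $m+1,\ldots,m+N$ are exactly $\sqrt{\lambda}\,\vecr_1,\ldots,\sqrt{\lambda}\,\vecr_N$. For each $j\in\{1,\ldots,N\}$ we therefore take $\matA=\stdbasis{m+j}{\stdbasis{m+j}}^\transpose\in\R^{(m+N)\times(m+N)}$. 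This matrix is positive semidefinite with $\tr\matA=1$, and
\begin{equation}
(\tilde{\matX}\vecr)^\transpose\matA\,\tilde{\matX}\vecr=\lambda\,\vecr_j^2 .
\end{equation}

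Next we invoke \cref{eq:psd_bound_hp} for each such $\matA$ with failure probability $\epsilon'$. This is legitimate because each $\matA$ is fixed: it depends neither on $\alpha$ nor even on $\matX$, and the statement of \cref{thm:psd_bound} holds for each fixed PSD matrix. It gives, with probability at least $1-\epsilon'$,
\begin{equation}
\lambda\,\vecr_j^2\leq 2\delta^2\log(2/\epsilon')\max_i\|\proj{i}\tilde{\matX}_i\|^2 .
\end{equation}
Taking a union bound over the $N$ coordinates with $\epsilon'=\epsilon/N$, all inequalities hold simultaneously with probability at least $1-\epsilon$. Dividing by $\lambda$ and taking square roots yields
\begin{equation}
\|\vecr\|_\infty\leq\delta\sqrt{2\log(2N/\epsilon)}\,\frac{\max_i\|\proj{i}\tilde{\matX}_i\|}{\sqrt{\lambda}} .
\end{equation}
Since $N\leq m+N$ and $\log$ is monotone, this implies the stated bound. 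One could equally union bound over all $m+N$ rows of $\tilde{\matX}\vecr$, which gives exactly the $\log(2(m+N)/\epsilon)$ factor appearing in the statement.

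An alternative route, in the spirit of \cref{cor:generalization_bd_as}, is to take $\matA=(\tilde{\matX}^\dagger)^\transpose\stdbasis{j}{\stdbasis{j}}^\transpose\tilde{\matX}^\dagger$. Then
\begin{equation}
(\tilde{\matX}\vecr)^\transpose\matA\,\tilde{\matX}\vecr=\vecr_j^2,
\qquad
\tr\matA={\stdbasis{j}}^\transpose(\tilde{\matX}^\transpose\tilde{\matX})^{-1}\stdbasis{j}\leq 1/\lambda ,
\end{equation}
which leads to the same conclusion.

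There is no substantial obstacle here; the only point needing care is that \cref{eq:psd_bound_hp} is a statement for a single fixed $\matA$. It therefore cannot be applied to an $\alpha$-dependent choice, such as the coordinate achieving the maximum. This is why the union bound over a fixed finite family of matrices is needed, and it is what produces the logarithmic dependence on the dimension.
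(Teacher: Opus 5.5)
Your proof is correct and is essentially the paper's argument. You apply \cref{eq:psd_bound_hp} to fixed coordinate projectors $\stdbasis{k}{\stdbasis{k}}^\transpose$ of $\R^{m+N}$, take a union bound, and read off the last $N$ entries of $\tilde{\matX}\vecr$ as $\sqrt{\lambda}\,\vecr$. The one difference is that you union bound only over the $N$ coordinates $k=m+1,\ldots,m+N$ that matter, which gives the slightly sharper factor $\log(2N/\epsilon)$; the paper union bounds over all $m+N$ rows, which is where its $\log(2(m+N)/\epsilon)$ comes from.
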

\begin{proof}
    Apply \cref{eq:psd_bound_hp} of \cref{thm:psd_bound} to the $(m+N)\times (m+N)$ coordinate projectors $\stdbasis{1}{\stdbasis{1}}^\transpose,\ldots,\stdbasis{m}{\stdbasis{m}}^\transpose$ and use a union bound over the failure probabilities.  This shows, with probability at least $1-\epsilon$, we have
    \begin{equation}
    \|\tilde{\matX}\vecw - \tilde{\matX}\vecq\|_\infty \leq \delta\sqrt{2\log(2(m+N)/\epsilon)}\left(\max_i \|\proj{i}\tilde{\matX}_i\|\right).
\end{equation}
The desired result follows by recognizing the last $N$ entries of $\tilde{\matX}\vecw - \tilde{\matX}\vecq$ as $\sqrt{\lambda}(\vecw-\vecq)$.
\end{proof}
The utility of this result is that it bounds above the $\ell^\infty$-norm of $\vecq$ based on $\lambda$, $\matX$, and $\vecw$.  This allows us to use the alphabet in \cref{eq:alphabet_delta_b} and determine an appropriate value of $b$ so that with high probability no clipping will need to be performed.

\section{Choosing \texorpdfstring{$\lambda$}{lambda}}

\subsection{Theory}\label{sec:lambda_theory}

\Cref{thm:generalization_bd_train,cor:generalization_bd_as} bound the generalization error of OPTQ in two different ways, but both bounds feature a dependence on $\lambda$.  Suppose each row of $\matX$ is drawn i.i.d.\ from a distribution with compact support in the ball of radius of $R$.  If the following two inequalities hold
\begin{align}
    \lambda &\gtrsim R^2 \log (N) \label{eq:lambda_lb_R2logN},\\
    \lambda &\gtrsim \max_i \|\matX_i\|^2,
\end{align}
then with high probability we can bound for a random vector $\vecz$
\begin{equation}
        \ev_{\vecz} [|\vecz^\transpose \vecw - \vecz^\transpose \vecq|^2] \lesssim \frac{\|\tilde{\matX}\vecr\|^2}{m}\label{eq:sim_samp_bound}
\end{equation}
whenever $\vecz$ is drawn from the same distribution that each row of $\matX$ was and
\begin{equation}
            \ev_{\vecz} [|\vecz^\transpose \vecw - \vecz^\transpose \vecq|^2] \lesssim \delta^2\ev_{\vecz} \left[\|\vecz\|^2\right].\label{eq:sim_always_bound}
\end{equation}
always.  In general, we won't know the value of $R$ in \cref{eq:lambda_lb_R2logN}.  However, a lower bound for $R$ is the maximum row norm of $\matX$.  Therefore, if we wish to have both bounds \cref{eq:sim_samp_bound,eq:sim_always_bound} we should ensure
\begin{equation}
    \lambda \gtrsim \max\left\{\max_i \|\matX_i\|^2,\max_i\|(\matX^\transpose)_i\|^2\log(N)\right\}.
\end{equation}

We now  turn our attention to \cref{eq:expected_squared_quant_error_full} of \cref{thm:generalization_bd_train}.  This bound depends on $\lambda$ in two ways: through the regularized training error term $\|\tilde{\matX}\vecr\|^2/m = (\|\matX\vecr\|^2+\lambda\|\vecr\|^2)/m$ and the multiplicative factor in front of it.  We consider how to choose $\lambda$ to minimize this bound.  
\begin{remark}
    To minimize the bound in \cref{eq:general_empirical_to_population}, 
    a good choice of $\lambda$ is
        \begin{equation}
        \lambda = \beta\cdot\frac{\|\matX\|_{\textrm{F}}^2}{m^{1/3}\min\{m,N\}^{2/3}},\label{eq:lambda_rec_sampling}
    \end{equation}
    where $\beta$ is a scaling factor to be determined empirically.
\end{remark}
We proceed to justify this remark.  From \cref{thm:generalization_bd_train}, for $\lambda > 2R^2\log(N/\epsilon)$ we have
    \begin{align}
    \ev\left[ |\vecx^\transpose\vecw - \vecx^\transpose\vecq|^2\right]
    &\leq \frac{C\delta^2}{m}\left(1-\sqrt{\frac{2R^2\log (N/\epsilon)}{\lambda}} \right)^{-1}\sum_{i=1}^N \|\tilde{\matX}\vup{i}(\lambda)\|^2.\label{eq:after_zhang_bd}
    \end{align}
    Here $C=1$ for stochastic OPTQ and $C=1/4$ for deterministic OPTQ, corresponding respectively to $|\alpha_i|^2\leq \delta^2$ and $|\alpha_i|^2\leq \delta^2/4$.
    We will attempt to minimize the bound in \cref{eq:after_zhang_bd}.  Since the bound in \cref{eq:after_zhang_bd} is strictly positive, minimizing it is equivalent to minimizing
    \begin{equation}
    f(\lambda) = \log\left(\sum_{i=1}^N \|\tilde{\matX}\vup{i}(\lambda)\|^2\right) - \log\left(1-\sqrt{\frac{2R^2\log(N/\epsilon)}{\lambda}}\right).
    \end{equation}
    For each $i$, the function $g_i: \lambda \mapsto \|\tilde{\matX}\vup{i}(\lambda)\|^2$ is differentiable for $\lambda > 0$, as can be seen by standard formulas for matrix-vector calculus.  However, this function also has the following representation:
    \begin{equation}
    g_i(\lambda) = \min_{\vecv\in V_i}\left[\|\matX\vecv\|^2+\lambda\|\vecv\|^2\right],\label{eq:variation-representation-gi}
    \end{equation}
    where $V_i$ is the affine subspace
  $V_i = \stdbasis{i} + \operatorname{span}\{\stdbasis{j} : i < j \leq N\}$.
    The minimum in \cref{eq:variation-representation-gi} is attained at exactly one point $\vecv=\vup{i}(\lambda)$.  In particular, this means that each function $g_i(\lambda)$ has derivative equal to
    $\|\vup{i}(\lambda)\|^2$.
    We therefore see that
    \begin{equation}f'(\lambda) = \frac{\sum_{i=1}^N \|\vup{i}(\lambda)\|^2}{\sum_{i=1}^N \|\tilde{\matX}\vup{i}(\lambda)\|^2} - \frac{\sqrt{R^2\log(N/\epsilon)}}{\sqrt{2\lambda^3}\left(1-\sqrt{\frac{2R^2\log(N/\epsilon)}{\lambda}}\right)}\,.\label{eq:ub_lambda_deriv}
    \end{equation}
    The first term of \cref{eq:ub_lambda_deriv} is at most $1/\lambda$.  If we let $\lambda = \alpha R^2\log(N/\epsilon)$.  We can then bound
    \begin{align}
        f'(\lambda) &\leq \frac{1}{\lambda}\left(1 - \frac{1}{\sqrt{2\alpha}-2}\right).
    \end{align}
    Thus $f'(\lambda) \leq 0$ whenever $\lambda \leq 4.5R^2\log(N/\epsilon)$.
    
    On the other hand, the first term of \cref{eq:ub_lambda_deriv} is at least $N/(N\lambda+\|\matX\|_{\textrm{F}}^2)$.  Therefore, we can bound
    \begin{align}
        f'(\lambda) &\geq \frac{N\sqrt{2\lambda^3}\left(1-\sqrt{\frac{2R^2\log(N/\epsilon)}{\lambda}}\right) - \sqrt{R^2\log(N/\epsilon)}\left(N\lambda + \|\matX\|_{\textrm{F}}^2\right)}{\left(N\lambda + \|\matX\|_{\textrm{F}}^2\right)\sqrt{2\lambda^3}\left(1-\sqrt{\frac{2R^2\log(N/\epsilon)}{\lambda}}\right)}\\
        &= \frac{N\sqrt{2\lambda^3}-(3N\lambda+\|\matX\|_{\textrm{F}}^2)\sqrt{R^2\log(N/\epsilon)}}{\left(N\lambda + \|\matX\|_{\textrm{F}}^2\right)\sqrt{2\lambda^3}\left(1-\sqrt{\frac{2R^2\log(N/\epsilon)}{\lambda}}\right)}
    \end{align}
    and so $f'(\lambda) \geq 0$ whenever
    \begin{align}\lambda &\geq \left(\sqrt{\frac{R^2\log(N/\epsilon)}{2}}\cdot\frac{3N\lambda+\|\matX\|_{\textrm{F}}^2}{N}\right)^{2/3}.
    \end{align}
    In particular, $f'(\lambda) \geq 0$ whenever
    \begin{align}
    \lambda &\geq \left(\sqrt{2R^2\log(N/\epsilon)}\max\left\{3\lambda, \frac{\|\matX\|_{\textrm{F}}^2}{N}\right\}\right)^{2/3}.
    \end{align}
    This means that the value $\lambda_*$ of $\lambda$ that minimizes $f(\lambda)$ satisfies
    \begin{equation}\label{eq:lambda_*_bd}
        4.5R^2\log(N/\epsilon) \leq \lambda_* \leq \max\left\{18R^2\log(N/\epsilon),\left(\frac{\sqrt{2R^2\log(N/\epsilon)}\|\matX\|_{\textrm{F}}^2}{N}\right)^{2/3}\right\}.
    \end{equation}

    In the regime where each row of $\matX$ is an i.i.d.\ sample from the same distribution as a generic test point, we should choose $\lambda$ satisfying \cref{eq:lambda_*_bd}.  We now proceed to simplify this bound.  Let $K$ be such that $R^2 = K^2 \ev[\|\vecz\|^2]$.  For distributions for which the empirical second moment is representative of its expectation, $\|\matX\|_{\textrm{F}}^2/m \approx \ev\|\vecz\|^2$, and hence $R^2 \approx K^2 \|\matX\|_{\textrm{F}}^2/m$.  This replacement is, of course, heuristic unless accompanied by an explicit concentration assumption on the row norms.  With this definition of $K$, \cref{eq:lambda_*_bd} becomes
    \begin{equation}
    \frac{\|\matX\|_{\textrm{F}}^2}{m} \lesssim_{K,\log} \lambda_* \lesssim_{K,\log} \frac{\|\matX\|_{\textrm{F}}^2}{m^{1/3}\min\{m,N\}^{2/3}},
    \end{equation}
    where $\lesssim_{K,\log}$ hides the dependence on $K$ and $\log$ factors.  In light of \cref{cor:generalization_bd_as}, it is sensible to use the upper bound as a conservative estimate for $\lambda$.  This leads to the scaling in \cref{eq:lambda_rec_sampling}.  While this argument is not precise enough to determine an optimal scaling constant, in the next subsection we will empirically determine a good choice of constant.

\subsection{Experiments}

We now demonstrate how the recommendations from \cref{sec:lambda_theory} work in practice.  To do this, we run the output of OPTQ with varying choices of $\lambda$ for data drawn from three different distributions and show the generalization error of the result.\footnote{See \url{https://github.com/ErinGeorge/OPTQ-generalization} for the code used for these experiments.}

We consider a total of seven different choices of $\lambda$ for OPTQ.  Five of these choices correspond to \cref{eq:lambda_rec_sampling} with different values of $\beta$.  The remaining thwo are recommendations previously made in the literature.  These are $\lambda_1 = 0.01 \|\matX\|_{\textrm{F}}^2 / N$ from \citet{frantar2023optq} and $\lambda_2 = 10^{-3} \cdot\|\matX\|_{\textrm{op}}$ from \citet{zhang2025qronos}.\footnote{The recommendation $\lambda_2$ is for Qronos, but the effect of $\lambda$ on OPTQ is identical.}

In all experiments, we sample the rows of the $m \times N$ data matrix $\matX$ i.i.d.\ from the same distribution as a generic test point $\vecz$ and vary the number of rows $m$.  We simplify the measures of the generalization error by reporting the value of
\begin{equation}
    E = \sum_{i=1}^N \ev[|\vecz^\transpose \vup{i}|^2]\label{eq:test_error_formula}
\end{equation}
as the test error.  For stochastic OPTQ, this expression is precisely $\ev[|\vecz^\transpose \vecr|^2]$ where the contribution from the $\alpha_i$ is ignored (see \cref{eq:ev_alpha_bd_initial,eq:hp_alpha_bd_initial} from the proof of~\cref{thm:psd_bound}).  The contribution of each $\alpha_i$ depends on both the randomness of stochastic OPTQ and the individual quantization error for each coordinate.  Up to constant factors of $\delta$, \cref{eq:test_error_formula} can be regarded as both a worst-case for the generalization error and an average-case for the generalization error (over the randomness of $\alpha_i$ and the initial position of $\vecw$).   We approximate the expectations in \cref{eq:test_error_formula} with $1000$ samples of $\vecz$.  For all the plots in this section, the mean and standard deviation over $30$ trials (each an independent sample of $\matX$) is shown.

\paragraph{Uniform Hadamard} The first data distribution we consider is the uniform distribution on rows from Sylvester's construction of a $64\times 64$ Hadamard matrix.   In the full data distribution, each coordinate is independent of the others.  However, in a low sampling regime, many spurious correlations between the columns of $\matX$ can appear.  As such, OPTQ performs poorly when improperly regularized. The results of these experiments are shown in \cref{fig:uniform_hadamard}.

\begin{figure}[t!]
    \includegraphics[width=0.35\textwidth]{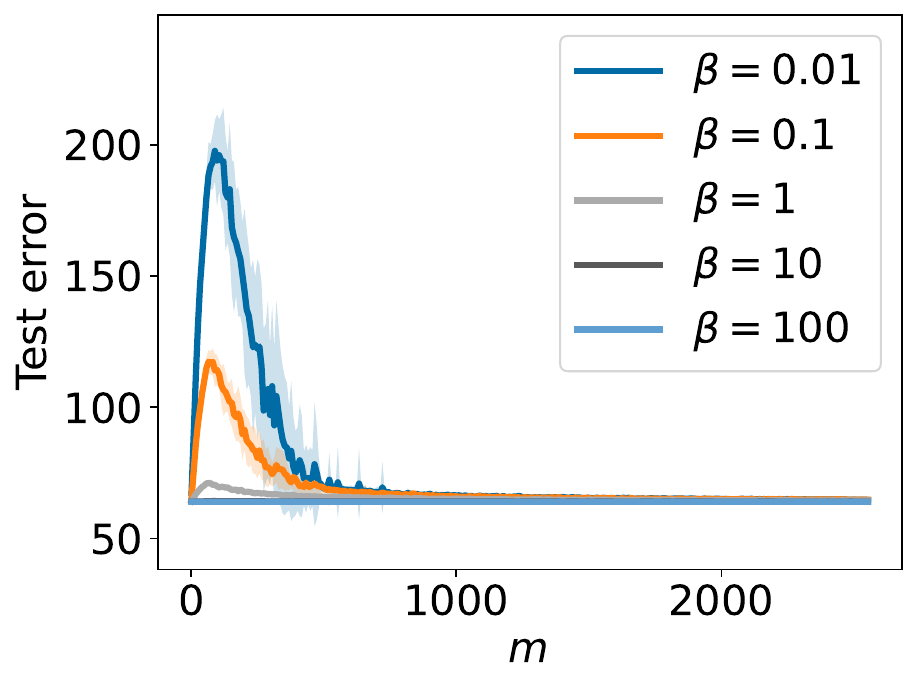}
    \includegraphics[width=0.35\textwidth]{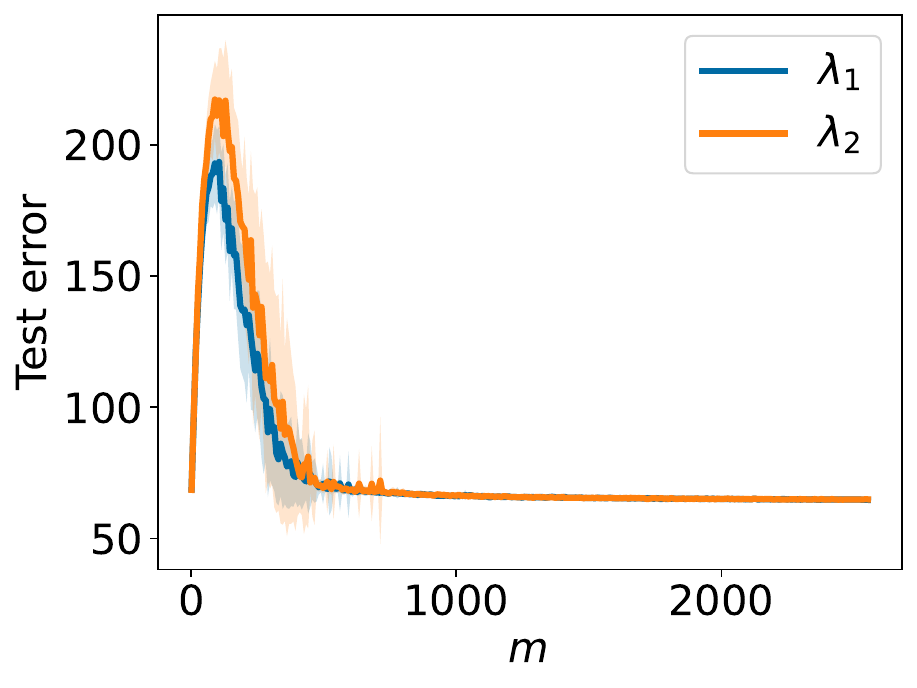}
    \centering
    \caption{Test error versus $m$ for various choices of $\lambda$ when OPTQ is run with samples from the uniform Hadamard distribution.}
    \label{fig:uniform_hadamard}
\end{figure}

\paragraph{Non-uniform Hadamard} The second data distribution we consider again draws rows from a $64\times 64$ Hadamard matrix, but with a non-uniform distribution on the rows.  The probability of a row $i$ being drawn is proportional to $1/i$.  Similar to the uniform Hadamard distribution, this data distribution results in many spurious correlations appearing in a low sampling regime.  However, the true data distribution does have a low-rank structure exploitable by OPTQ to reduce the error.  The results of these experiments are shown in \cref{fig:nonuniform_hadamard}.

\begin{figure}
    \includegraphics[width=0.35\textwidth]{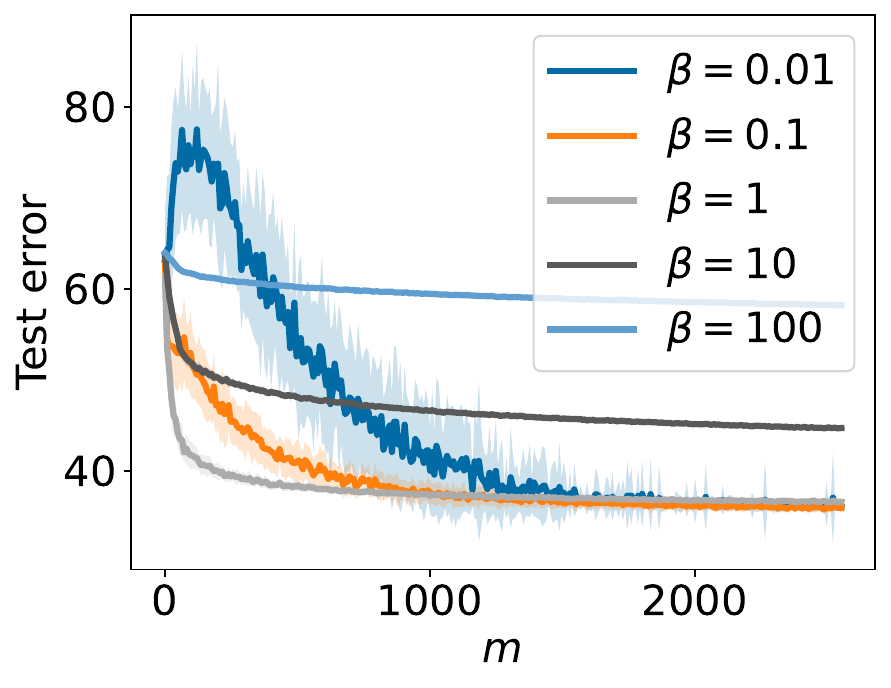}
    \includegraphics[width=0.35\textwidth]{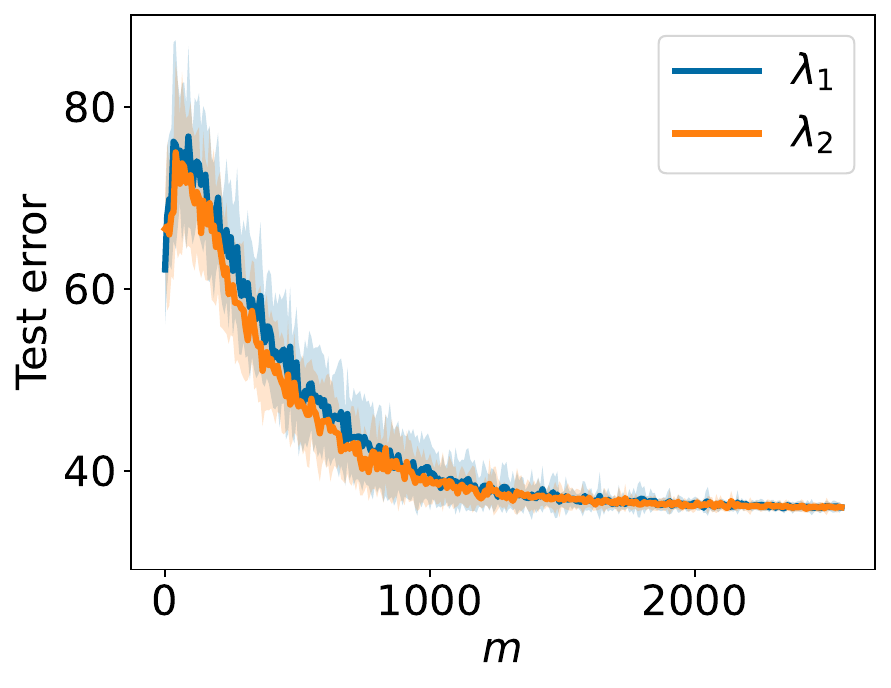}
    \centering
    \caption{Test error versus $m$ for various choices of $\lambda$ when OPTQ is run with samples from the non-uniform Hadamard distribution.}
    \label{fig:nonuniform_hadamard}
\end{figure}

\paragraph{ReLU neural network}  The last data distribution we consider is a model of a ReLU neural network.  First, we fix a $64\times 16$ matrix $\matW$ where each coordinate is sampled i.i.d.\ from a standard normal distribution.  We generate a random vector $\vecz \in \R^{64}$ by first sampling a random vector $\vecn \in \R^{16}$ with i.i.d.\ standard normal coordinates and then applying the ReLU function\footnote{The ReLU function $\phi$ is defined as $\phi(x) = \max\{x,0\}$.} coordinate-wise to $\matW\vecn$.  This is a model of what the data distribution of the output of a single neural network layer may look like.  Here, the low-rank structure comes from the overparameterization of the neural network layer and the positivity enforced by the ReLU function.  The results of these experiments are in \cref{fig:relu_nn}.

\begin{figure}[t!]
    \includegraphics[width=0.35\textwidth]{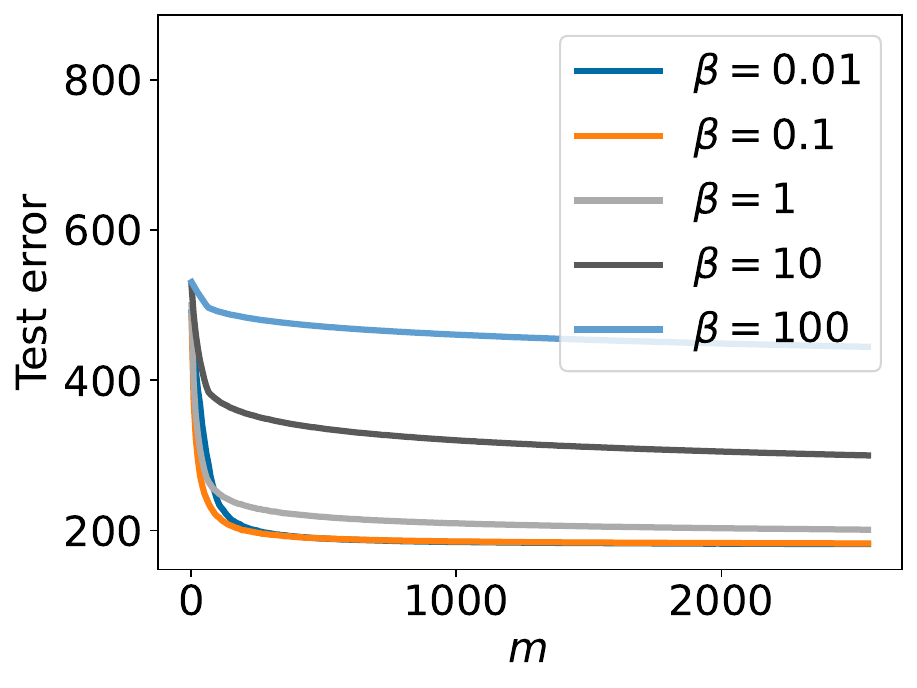}
    \includegraphics[width=0.35\textwidth]{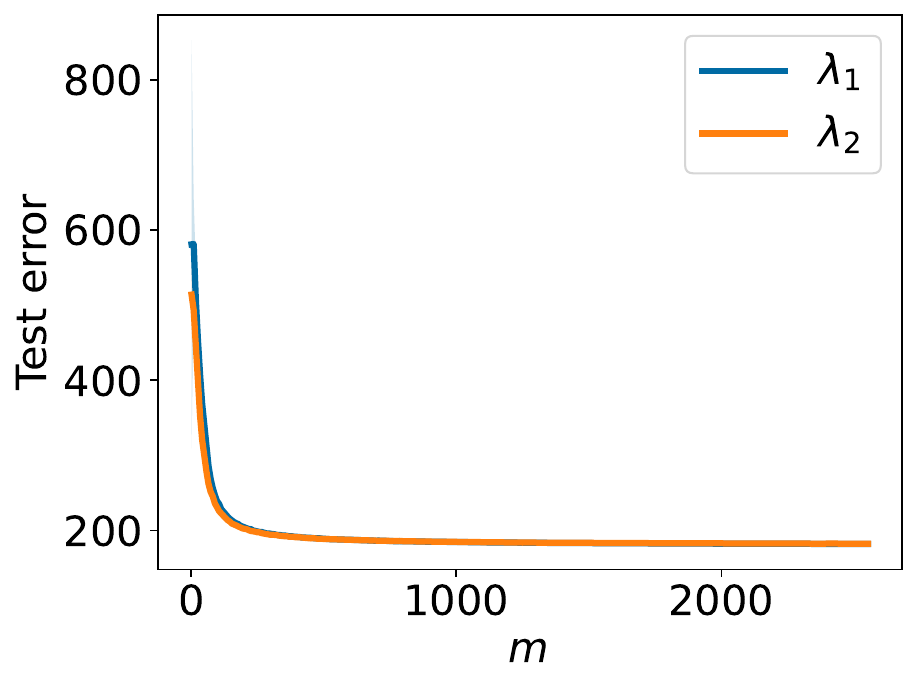}
    \centering
    \caption{Test error versus $m$ for various choices of $\lambda$ when OPTQ is run with samples from the ReLU neural network distribution.}
    \label{fig:relu_nn}
\end{figure}

Across \cref{fig:uniform_hadamard,fig:nonuniform_hadamard,fig:relu_nn}, a clear trend emerges.  The existing recommendations for $\lambda$ both do well in data distributions with low-rank structure in a well-sampled regime, but are susceptible to producing large errors in a low sampling regime.  However, choosing $\lambda$ according to \cref{eq:lambda_rec_sampling} can result in the same error being achieved in the well-sampled regime without the large error present in the low sampling regime.  Across all the experiments, choosing $\beta=1$ is able to substantially reduce the error in \cref{fig:uniform_hadamard,fig:nonuniform_hadamard} with low $m$ while still achieving small values for the error in \cref{fig:nonuniform_hadamard,fig:relu_nn} for large $m$.

\section{Conclusion}

In this work, we bound the generalization error of OPTQ in two different settings: when the input data is drawn i.i.d.\ from the same distribution as a test point and when randomized rounding is used as the quantization procedure for OPTQ.  Using these results, we derive a formula for choosing the quantization parameter $\lambda$ that outperforms previous recommendations in a low-sampling regime while still achieving similarly good generalization performance in a high-sampling regime.

\subsubsection*{Acknowledgments}

E.G.\ was supported by NSF grant DMS-2503574.
R.S.\ was partially supported by grant DMS-2410717.

\subsubsection*{AI disclosure}

In this work, we used generative AI tools to find references in the literature and to edit the paper.  All AI-provided references and edits were then reviewed by the authors.  We did not use generative AI tools for any other task.

\bibliographystyle{plainnat}
\bibliography{refs}

\appendix

\crefalias{section}{appendix}

\section{Proof of \texorpdfstring{\cref{thm:generalization_bd_general,thm:generalization_bd_train}}{Theorem 3.1 and Corollary 3.2}}\label{sec:generalization_bd_proof} To prove \cref{thm:generalization_bd_general}, we will use the following result.

\begin{theorem}[{Matrix Chernoff inequality \citep[Remark 5.3]{tropp_user-friendly_2012}}]\label{thm:mat_chernoff_ineq}
    Consider a finite sequence $\{\matA^{(k)}\}$ of independent, random, self-adjoint matrices with dimension $d$. Assume that each random matrix is positive semidefinite and satisfies $\|\matA^{(k)}\|_{\textrm{op}} \leq R$ almost surely.
    
Let $\sigma_{\min}(\cdot)$ be the smallest eigenvalue of its matrix argument and define
\begin{equation}
\mu_{\min} \defeq \sigma_{\min}\left(\sum_k \ev[\matA^{(k)}]\right).
\end{equation}
Then for $t \in [0,1]$,
\begin{align}
\prob\left[\sigma_{\min}\left(\sum_k \matA^{(k)}\right) \leq (1-t)\mu_{\min}\right] &\leq d \exp\left(-\frac{t^2\mu_{\min}}{2R}\right).
\end{align}
\end{theorem}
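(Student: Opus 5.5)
The plan is to prove the lower-tail bound with the matrix Laplace transform method, following the standard argument of \citet{tropp_user-friendly_2012}. By homogeneity, replacing $\matA^{(k)}$ with $\matA^{(k)}/R$ and $\mu_{\min}$ with $\mu_{\min}/R$, we may assume $R=1$. Write $\matS\defeq\sum_k\matA^{(k)}$ and fix $\theta>0$. Because $-\theta\,\sigma_{\min}(\matS)$ is the largest eigenvalue of $-\theta\matS$, and $e^{\lambda_{\max}(\cdot)}$ is at most the trace of the matrix exponential (the eigenvalues of $e^{-\theta\matS}$ are positive), Markov's inequality gives
\begin{equation*}
\prob\left[\sigma_{\min}(\matS)\le(1-t)\mu_{\min}\right]\le e^{\theta(1-t)\mu_{\min}}\,\ev\,\tr\exp(-\theta\matS).
\end{equation*}

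Next I control $\ev\,\tr\exp(-\theta\matS)$ by subadditivity of matrix cumulant generating functions. This follows from Lieb's concavity theorem: the map $\matA\mapsto\tr\exp(\matU+\log\matA)$ is concave. Applying Jensen's inequality one summand at a time, using independence, yields
\begin{equation*}
\ev\,\tr\exp\left(\sum_k -\theta\matA^{(k)}\right)\le\tr\exp\left(\sum_k\log\ev e^{-\theta\matA^{(k)}}\right).
\end{equation*}
This is the step I expect to be the main obstacle if it is to be proved from scratch, since it rests on Lieb's theorem, which is deep. Here I would cite it rather than reprove it.

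For each summand, the scalar map $x\mapsto e^{-\theta x}$ is convex on $[0,1]$, so it lies below its chord: $e^{-\theta x}\le 1+(e^{-\theta}-1)x$. The eigenvalues of $\matA^{(k)}$ lie in $[0,1]$, so the transfer rule turns this into the Loewner inequality $e^{-\theta\matA^{(k)}}\preceq\matI+g(\theta)\matA^{(k)}$, where $g(\theta)\defeq e^{-\theta}-1<0$. Taking expectations and using that $\log$ is operator monotone together with $\log(1+x)\le x$, I get
\begin{equation*}
\log\ev e^{-\theta\matA^{(k)}}\preceq g(\theta)\,\ev\matA^{(k)}.
\end{equation*}
The trace exponential is monotone in the Loewner order, so
\begin{equation*}
\tr\exp\left(g(\theta)\sum_k\ev\matA^{(k)}\right)\le d\exp\left(\lambda_{\max}\left(g(\theta)\sum_k\ev\matA^{(k)}\right)\right)=d\,e^{g(\theta)\mu_{\min}}.
\end{equation*}
The last equality holds because $g(\theta)<0$, which turns the largest eigenvalue into $g(\theta)$ times the smallest one.

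Combining the three displays, the probability is at most $d\exp\big((\theta(1-t)+e^{-\theta}-1)\mu_{\min}\big)$. For $t\in(0,1)$ I choose $\theta=-\log(1-t)$, which minimizes this exponent. This gives the bound $d\,\big[e^{-t}(1-t)^{-(1-t)}\big]^{\mu_{\min}}$. The case $t=0$ is trivial, and the case $t=1$ follows by taking a limit. Finally, the elementary inequality $(1-t)\log(1-t)\ge -t+t^2/2$ on $[0,1)$ can be checked by Taylor expansion, since every coefficient beyond the quadratic term is nonnegative. It shows that the exponent is at most $-t^2\mu_{\min}/2$. Restoring the factor $R$ gives the stated bound $d\exp(-t^2\mu_{\min}/(2R))$.
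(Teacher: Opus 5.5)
Your proposal is correct. The paper gives no proof of its own and simply imports the result from \citet{tropp_user-friendly_2012} (Theorem 5.1 together with Remark 5.3), and your argument is that source's matrix Laplace transform proof: Lieb-based subadditivity of matrix cumulant generating functions, the chord bound $e^{-\theta\matA^{(k)}}\preceq\matI+(e^{-\theta}-1)\matA^{(k)}$, the choice $\theta=-\log(1-t)$, and then $t+(1-t)\log(1-t)\ge t^2/2$, with Lieb's theorem cited rather than reproved, as Tropp does.
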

We now proceed to prove \cref{thm:generalization_bd_general}.
\generalizationboundgeneral* \begin{proof} Let $\matSigma \defeq \ev_{\vecx}[\vecx\vecx^\transpose]. $ We first note that, for any $\vecr\in\R^N$, \begin{equation} \ev_{\vecx}\left[|\vecx^\transpose\vecr|^2\right] = \vecr^\transpose\matSigma\vecr. \label{eq:ev_as_cov_qform_general} \end{equation} Let 
\( \matP \defeq \left(\matSigma+(\lambda/m)\matI\right)^{-1/2} \)
 and let $K=\lceil \lambda/R^2\rceil$. Consider 
 \begin{equation}
  \matS \defeq \matP \frac{1}{m} \left(\matX^\transpose\matX+\lambda\matI\right) \matP. 
 \end{equation} 
  Then $\ev[\matS]=\matI$. We write $\matS$ as
  \begin{equation}
   \matS = \left( \sum_{j=1}^K \frac{\lambda}{Km}\matP^2 \right) + \left( \sum_{i=1}^m \frac{1}{m} \matP\xsamp{i}{\xsamp{i}}^\transpose\matP \right). 
 \end{equation}
Each summand has operator norm at most $R^2/\lambda$.
Indeed, since
\(
    \matSigma+(\lambda/m)\matI \succeq (\lambda/m)\matI,
\)
we have
\(
    \|\matP\|_{\mathrm{op}}^2
    =
    \left\|
    \left(\matSigma+(\lambda/m)\matI\right)^{-1/2}
    \right\|_{\mathrm{op}}^2
    \leq
    \frac{m}{\lambda}.
\)
Therefore, using $\|\xsamp{i}\|_2\leq R$,
\begin{equation}
    \left\|
    \frac{1}{m}
    \matP\xsamp{i}{\xsamp{i}}^\transpose\matP
    \right\|_{\mathrm{op}}
    =
    \frac{1}{m}\|\matP\xsamp{i}\|_2^2
    \leq
    \frac{R^2}{\lambda}.
\end{equation}
Similarly, for the artificial regularization summands,
\(
    \left\|
    \frac{\lambda}{Km}\matP^2
    \right\|_{\mathrm{op}}
    \leq
    \frac{1}{K}
    \leq
    \frac{R^2}{\lambda},
\)
where the last inequality follows from $K=\lceil \lambda/R^2\rceil$.
Therefore, by \cref{thm:mat_chernoff_ineq}, 
\begin{equation}
 \prob\left[ \sigma_{\min}(\matS)\leq 1-t \right] \leq N\exp\left(-\frac{t^2\lambda}{2R^2}\right). 
\end{equation} 
Taking \( t=\sqrt{\frac{2R^2\log(N/\epsilon)}{\lambda}}, \)
 we obtain that, with probability at least $1-\epsilon$, \begin{equation} \sigma_{\min}(\matS) > 1-\sqrt{\frac{2R^2\log(N/\epsilon)}{\lambda}}. \label{eq:min_cov_eval_general} \end{equation} On this event, \begin{align} \sup_{\vecu\in\R^N} \frac{\vecu^\transpose\matSigma\vecu} {(1/m)(\|\matX\vecu\|^2+\lambda\|\vecu\|^2)} &= \sup_{\vecu\in\R^N} \frac{\vecu^\transpose(\matP\matSigma\matP)\vecu} {\vecu^\transpose\matS\vecu} \\ &\leq \left( 1-\sqrt{\frac{2R^2\log(N/\epsilon)}{\lambda}} \right)^{-1}, \end{align} where we used $\|\matP\matSigma\matP\|_{\mathrm{op}}\leq 1$. Combining this estimate with \cref{eq:ev_as_cov_qform_general} proves the result. \end{proof} \generalizationboundtrain* \begin{proof} Apply \cref{thm:generalization_bd_general} to the OPTQ error vector \begin{equation} \vecr=\vecq-\vecw. \end{equation} Then, to obtain \cref{eq:expected_squared_quant_error_full}, we use the OPTQ identity \citep[Eq. (3.5)]{zhang2025provable} \begin{equation} \|\tilde{\matX}\vecr\|^2 = \sum_{i=1}^N |\alpha_i|^2 \|\proj{i}\tilde{\matX}_i\|^2. \label{eq:zhang_train_err_bd} \end{equation} For deterministic OPTQ, $|\alpha_i|\leq \delta/2$, and for stochastic OPTQ, $|\alpha_i|\leq\delta$ almost surely. Hence \begin{equation} |\alpha_i|^2 \leq C\delta^2, \end{equation} where $C=1/4$ for deterministic OPTQ and $C=1$ for stochastic OPTQ. Substituting this into \cref{eq:zhang_train_err_bd} and applying \cref{eq:general_empirical_to_population} 
 completes the proof. \end{proof}

\section{Proof of \texorpdfstring{\cref{thm:psd_bound}}{Theorem 3.2}}\label{sec:psd_bound_proof}

We first will establish the following lemma.

\begin{lemma}\label{lem:vi_A_vi}
Let $\matA\in\R^{(m+N)\times(m+N)}$ be a positive semi-definite matrix.  If we run OPTQ, then
\begin{equation}
    \sum_{i=1}^N (\tilde{\matX}\vup{i})^\transpose \matA \tilde{\matX}\vup{i} \leq \left(\max_{i} \|\proj{i}\tilde{\matX}_i\|^2\right)\tr\matA.
\end{equation}
\end{lemma}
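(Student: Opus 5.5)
The key observation is that the vectors $\vecu^{(i)} \defeq \tilde{\matX}\vup{i} = \proj{i}\tilde{\matX}_i$, $i=1,\ldots,N$, are mutually orthogonal in $\R^{m+N}$. Once this is established, the lemma is a short trace computation. So the plan has two steps: first prove the orthogonality, then bound the sum using it.

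\emph{Orthogonality.} Fix $i<j$. By construction, $\vup{j}$ is supported on coordinates $\geq j > i$. Hence $\tilde{\matX}\vup{j}$ lies in the column span of $\tilde{\matX}_{>i}$.

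On the other hand, $\tilde{\matX}\vup{i}=\proj{i}\tilde{\matX}_i$ lies in the orthogonal complement of that span. This is exactly the geometric consequence of the variational characterization noted at the end of the setting section: the first-order optimality conditions for $\vup{i}$ say that $\tilde{\matX}\vup{i}\perp \tilde{\matX}_j$ for all $j>i$.

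Therefore $(\vecu^{(i)})^\transpose\vecu^{(j)}=0$ for $i\neq j$.

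\emph{Trace bound.} Let $c\defeq\max_i\|\vecu^{(i)}\|^2$. If some $\vecu^{(i)}=0$, its term in the sum is zero, so drop it. For the remaining indices, set $\hat{\vecu}^{(i)}=\vecu^{(i)}/\|\vecu^{(i)}\|$. These form an orthonormal family, which we extend to an orthonormal basis of $\R^{m+N}$. Then
\begin{equation*}
\sum_i (\vecu^{(i)})^\transpose\matA\vecu^{(i)} = \sum_i \|\vecu^{(i)}\|^2 (\hat{\vecu}^{(i)})^\transpose\matA\hat{\vecu}^{(i)} \leq c\sum_i (\hat{\vecu}^{(i)})^\transpose\matA\hat{\vecu}^{(i)} \leq c\,\tr\matA.
\end{equation*}
The first inequality uses that each quadratic form is nonnegative because $\matA\succeq 0$. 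The second inequality holds because the trace equals the sum of the diagonal entries $(\hat{\vecu})^\transpose\matA\hat{\vecu}$ over a full orthonormal basis, and the omitted basis vectors contribute nonnegative terms.

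Equivalently, with $\matU$ the matrix whose columns are the $\vecu^{(i)}$, we have $\matU\matU^\transpose\preceq c\,\matI$, so the sum equals $\tr(\matA\matU\matU^\transpose)\leq c\tr\matA$.

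\emph{Main obstacle.} The only nontrivial step is justifying the orthogonality carefully from the least-squares optimality of $\vup{i}$. Once that is in place, the trace bound is routine.
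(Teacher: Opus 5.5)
Your proof is correct and follows the paper's argument. Like the paper, you establish that the vectors $\tilde{\matX}\vup{i}=\proj{i}\tilde{\matX}_i$ are mutually orthogonal (you justify this more explicitly via the support of $\vup{j}$), then normalize them and bound the sum of quadratic forms by a trace. Your final step is actually more careful than the paper's: the paper writes $\tr(\matU^\transpose\matA\matU)=\tr\matA$ although its $\matU$ has only $N$ orthonormal columns in $\R^{m+N}$, whereas your extension to a full orthonormal basis, using $\matA\succeq 0$, correctly gives $\tr(\matU^\transpose\matA\matU)\leq\tr\matA$.
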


\begin{proof}
Recall that by \cref{eq:vdef_pseudoinverse}, each vector $\tilde{\matX}\vup{i}$ is $\proj{i}\tilde{\matX}_i$, the projection of $\tilde{\matX}_i$ onto the orthogonal complement of the range of $\tilde{\matX}_{>i}$.  Therefore, $\tilde{\matX}\vup{1},\ldots,\tilde{\matX}\vup{N}$ is a collection of orthogonal vectors.  Define
\begin{equation}
    \vecui{i} = \frac{\tilde{\matX}\vup{i}}{\|\tilde{\matX}\vup{i}\|}
\end{equation}
for $i = 1,\ldots,N$ and let
\begin{equation}
    \matU = \begin{pmatrix} \vecui{1} & \cdots & \vecui{N}\end{pmatrix}.
\end{equation}
We bound
\begin{equation}
    \sum_{i=1}^N (\tilde{\matX}\vup{i})^\transpose \matA \tilde{\matX}\vup{i} \leq \left(\max_{i} \|\proj{i}\tilde{\matX}_i\|^2\right)\sum_{i=1}^N {\vecui{i}}^\transpose \matA \vecui{i} 
\end{equation}
and compute
\begin{equation}
   \sum_{i=1}^N {\vecui{i}}^\transpose \matA \vecui{i}  = \tr(\matU^\transpose \matA\matU) = \tr(\matA).\qedhere
\end{equation}
\end{proof}

We now proceed to prove \cref{thm:psd_bound}.

\psdbound*

\begin{proof}
We first show \cref{eq:psd_bound_ev}.  In the stochastic version of OPTQ, each $\alpha_i$ is a random variable that satisfies
\begin{align}
\ev[\alpha_i \mid \alpha_1,\ldots,\alpha_{i-1}] = 0\text{\quad and \quad}\ev[\alpha_i^2 \mid \alpha_1,\ldots,\alpha_{i-1}] \leq \frac{\delta^2}{4}.\label{eq:alpha_ev}
\end{align}
By the law of total expectation,
\begin{align}
\ev[\alpha_i \alpha_j] &= 0 \text{ if $i\neq j$ and}\\
\ev[\alpha_i^2] &\leq \frac{\delta^2}{4}.
\end{align}
We therefore see that
\begin{align}
    \ev_\alpha [(\tilde{\matX}\vecr)^\transpose\matA(\tilde{\matX}\vecr)] &= \sum_{i=1}^N \sum_{j=1}^N \ev[\alpha_i \alpha_j] (\tilde{\matX}\vup{i})^\transpose \matA \tilde{\matX}\vup{j} \\
    &\leq \sum_{i=1}^N \frac{\delta^2}{4} (\tilde{\matX}\vup{i})^\transpose \matA \tilde{\matX}\vup{i}.\label{eq:ev_alpha_bd_initial}
\end{align}
\Cref{eq:psd_bound_ev} follows by applying \cref{lem:vi_A_vi} to \cref{eq:ev_alpha_bd_initial}.

We now consider \cref{eq:psd_bound_hp}.  From \cref{eq:alpha_ev}, we see that the sequence of vectors
\begin{equation}
\vecui{j} = \sum_{i=1}^j \alpha_i \matA^{1/2} \tilde{\matX}\vup{i}
\end{equation}
is a vector martingale.  From \citep[Theorem 3.5]{pinelis1994optimum}, we can bound
\begin{equation}
\prob\left[\max_j \|\vecui{j}\|^2 \geq t\right] \leq 2\exp\left(-\frac{t}{2\delta^2\sum_{i=1}^N (\tilde{\matX}\vup{i})^\transpose \matA \tilde{\matX}\vup{i}}\right)\,.
\end{equation}
Since $(\tilde{\matX}\vecr)^\transpose \matA \tilde{\matX}\vecr = \|\vecui{N}\|^2$, we have
\begin{equation}
\prob\left[(\tilde{\matX}\vecr)^\transpose\matA\tilde{\matX}\vecr \geq t\right] \leq 2\exp\left(-\frac{t}{2\delta^2\sum_{i=1}^N (\tilde{\matX}\vup{i})^\transpose \matA \tilde{\matX}\vup{i}}\right)\,.
\end{equation}
We set $t = 2\delta^2\log(2/\epsilon)\sum_{i=1}^N (\tilde{\matX}\vup{i})^\transpose \matA \tilde{\matX}\vup{i}$ to see that with probability at least $1-\epsilon$ over the randomness of choosing $\alpha_i$ we have
\begin{equation}
(\tilde{\matX}\vecr)^\transpose\matA(\tilde{\matX}\vecr) \leq 2\delta^2\log(2/\epsilon)\sum_{i=1}^N (\tilde{\matX}\vup{i})^\transpose \matA \tilde{\matX}\vup{i}.\label{eq:hp_alpha_bd_initial}
\end{equation}
We then again apply \cref{lem:vi_A_vi} to \cref{eq:hp_alpha_bd_initial} to conclude \cref{eq:psd_bound_hp}.
\end{proof}

\section{Generalization behavior of round-to-nearest algorithms}\label{sec:rtn-analysis}

To precisely state a lower bound for MSQ, we first precisely define the algorithm.  Similarly to OPTQ, we will consider two variants of MSQ: a deterministic variant and a stochastic variant.  In deterministic MSQ, we obtain a quantized vector $\vecq_{\mathrm{MSQ}}$ by rounding each coordinate of $\vecw$ to the nearest multiple of $\delta$.  That is,
\begin{equation}
(\vecq_{\mathrm{MSQ}})_i \defeq \delta\left\lfloor\frac{\vecw_i}{\delta}+\frac{1}{2}\right\rfloor.
\end{equation}
In stochastic MSQ, we do a similar randomized rounding procedure as stochastic OPTQ to obtain a quantized vector $\vecq_{\textrm{SR}}$.  First, for each coordinate we define
\begin{equation}p_i \defeq \vecw_i - \left\lfloor\frac{\vecw_i}{\delta}\right\rfloor\end{equation}
and randomly choose
\begin{equation}
\alpha_i = \begin{cases}-p_i\delta &\text{with probability $1-p_i$} \\
(1-p_i)\delta &\text{with probability $p_i$.}
\end{cases}
\end{equation}
We then set $(\vecq_{\mathrm{MSQ}})_i = \vecw_i + \alpha_i$.

\begin{proposition}\label{prop:msq_rtn_lower}
Let $\delta > 0$ and $\vecz \in \R^N$ be a vector with finite second moment.   Suppose $\vecw$ is determined by sampling from the uniform distribution on $[-\delta/2, \delta/2]^N$.  Choose either stochastic MSQ or deterministic MSQ and let $\vecq_{\textrm{MSQ}}$ be the quantized vector obtained by applying this algorithm to $\vecw$.  There are absolute constants $c_1, c_2 > 0$ such that
\begin{equation}
\prob_{\vecw,\alpha} \bigg[\mathbb{E}_{\vecz} \left[|\vecz^\transpose\vecw-\vecz^\transpose\vecq_{\mathrm{MSQ}}|^2\right] \geq c_1\delta^2\mathbb{E}_{\vecz}\|\vecz\|^2\bigg] \geq c_2.
\end{equation}
Here, $\mathbb{E}_{\vecz}$ denotes that the expectation is taken only over $\vecz$ and $\prob_{\vecw,\alpha}$ denotes the probability with respect to the random choice of $\vecw$ and the randomness in stochastic MSQ, if chosen.
\end{proposition}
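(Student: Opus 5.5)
Write $\vecr \defeq \vecq_{\mathrm{MSQ}}-\vecw$ and $\matSigma_z \defeq \ev_{\vecz}[\vecz\vecz^\transpose]$, so that $\ev_{\vecz}|\vecz^\transpose\vecr|^2 = \vecr^\transpose\matSigma_z\vecr$. The plan is a Paley--Zygmund argument applied to the nonnegative random variable $Y \defeq \vecr^\transpose\matSigma_z\vecr$, where the randomness is over $\vecw$ and $\alpha$. We need a lower bound on $\ev Y$ and an upper bound on $\ev Y^2$. Once these are in hand, Paley--Zygmund gives $\prob[Y\geq \tfrac12\ev Y]\geq \tfrac14(\ev Y)^2/\ev Y^2$.

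\textbf{Step 1: structure of the error.} Since $\vecw$ is uniform on $[-\delta/2,\delta/2]^N$, its coordinates are independent. In the deterministic case, rounding sends each $\vecw_i$ to $0$ (ties have probability zero), so $\vecr=-\vecw$. Thus the $\vecr_i$ are i.i.d.\ uniform on $[-\delta/2,\delta/2]$, with mean $0$, $\ev\vecr_i^2=\delta^2/12$, and $|\vecr_i|\leq\delta/2$. In the stochastic case, $\vecr_i=\alpha_i$, where $\alpha_i$ depends only on $\vecw_i$ and independent rounding randomness. Hence the $\vecr_i$ are independent and $|\vecr_i|\leq\delta$. They are also mean zero: the first coordinate of $\vecw$ is symmetric, and conditionally $\ev[\alpha_i\mid\vecw_i]=0$. (I read $p_i$ as the fractional part of $\vecw_i/\delta$, which is what the construction intends.) The conditional second moment is $\ev[\alpha_i^2\mid \vecw_i]=\delta^2p_i(1-p_i)$. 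Since $p_i$ is uniform on $[0,1)$, averaging gives $\ev\alpha_i^2=\delta^2/6$. In both cases, then, the $\vecr_i$ are independent, mean zero, satisfy $\ev\vecr_i^2 = \sigma^2\geq \delta^2/12$, and satisfy $|\vecr_i|\leq\delta$.

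\textbf{Step 2: moments of $Y$.} Independence and mean zero give $\ev Y=\sum_i \sigma^2(\matSigma_z)_{ii}=\sigma^2\tr\matSigma_z=\sigma^2\ev_{\vecz}\|\vecz\|^2$. For the second moment, write $Y=\sum_{i,j}\vecr_i\vecr_j S_{ij}$ with $S=\matSigma_z$, and expand $\ev Y^2$. Only index patterns in which every index appears an even number of times survive. This gives
\[ \ev Y^2 = \sum_{i}\ev\vecr_i^4 S_{ii}^2 + \sigma^4\sum_{i\neq k}\left(S_{ii}S_{kk}+2S_{ik}^2\right)\leq \delta^4\left((\tr S)^2+2\|S\|_{\textrm{F}}^2\right)\leq 3\delta^4(\tr S)^2, \]
where the last step uses $\|S\|_{\textrm{F}}\leq\tr S$ for positive semidefinite $S$.

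\textbf{Step 3: conclude.} Combining the two moment bounds, Paley--Zygmund gives
\[ \prob\left[Y\geq \tfrac{\sigma^2}{2}\tr S\right]\geq \frac{\sigma^4(\tr S)^2}{4\cdot 3\delta^4(\tr S)^2}\geq \frac{1}{12\cdot 144}. \]
Since $\sigma^2\geq\delta^2/12$, we may take $c_1=1/24$ and $c_2=1/1728$. The degenerate case $\tr S=0$ is trivial. The main obstacle is the fourth-moment bookkeeping in Step 2, specifically checking that cross terms with an index appearing an odd number of times vanish. This requires genuine independence and zero mean of the $\vecr_i$ in the stochastic case, which Step 1 establishes by computing the marginal of $\alpha_i$ over uniform $\vecw_i$.
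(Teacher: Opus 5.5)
Your proof is correct, and it is organized the same way as the paper's. Both reduce to the quadratic form $Y=\vecr^\transpose\matSigma\vecr$ in independent, mean-zero, bounded coordinates. Both compute $\ev Y=\ev[\vecr_1^2]\tr\matSigma$ with $\ev[\vecr_1^2]\in\{\delta^2/12,\delta^2/6\}$, and both finish with Paley--Zygmund at $\theta=1/2$.

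The difference is how $\ev Y^2$ is controlled. The paper applies Hanson--Wright to get a two-sided tail bound for $Y$ around its mean. It then uses $\|\matSigma\|_{\textrm{op}}\le\|\matSigma\|_{\textrm{F}}\le\tr\matSigma$ to conclude that $Y$ is subexponential with norm $\lesssim\delta^2\tr\matSigma$. Finally it reads off $\ev Y^2\le C_4(\delta^2\tr\matSigma)^2$ from the moment characterization of subexponential variables. You instead expand the fourth-order sum directly, keeping only the fully-paired index patterns. This uses nothing beyond independence, zero mean and $|\vecr_i|\le\delta$, and gives $\ev Y^2\le\delta^4\bigl((\tr S)^2+2\|S\|_{\textrm{F}}^2\bigr)\le 3\delta^4(\tr S)^2$.

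Your route is fully elementary and produces explicit constants $c_1=1/24$, $c_2=1/1728$. The paper's $c_2=1/(576C_4)$ inherits unspecified absolute constants from Hanson--Wright and the subexponential equivalences. The paper's route does yield a genuine concentration inequality for $Y$. That is more than this proposition needs, though it would matter for an upper-tail or high-probability statement. For a constant-probability lower bound, your exact second-moment computation is the more economical choice.

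You also correctly read $p_i$ as the fractional part of $\vecw_i/\delta$, which the paper's displayed definition garbles. From that you verified that $p_i$ is uniform on $[0,1)$, which is what makes $\ev\alpha_i^2=\delta^2/6$.
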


\begin{proof}
Denote $\matSigma = \mathbb{E}[\vecz\vecz^\transpose]$.  
Let $\vecr$ denote $\vecw-\vecq_{\mathrm{MSQ}}(\vecw)$.  With this notation,
\begin{equation}\mathbb{E}_{\vecz} \left[|\vecz^\transpose\vecw-\vecz^\transpose\vecq_{\mathrm{MSQ}}|^2\right] = \vecr^\transpose\matSigma\vecr
\end{equation}
and
\begin{equation}
c\delta^2\mathbb{E}_{\vecz}\|\vecz\|^2 = c\delta^2\tr\matSigma.
\end{equation}
Therefore, we need only to demonstrate
\begin{equation}
\prob_{\vecw,\alpha} \left[\vecr^\transpose\matSigma\vecr \geq c\delta^2\tr\matSigma\right] \geq \frac{1}{2}.
    \label{eq:msq-theorem-restated}
\end{equation}

We now proceed to do so. We see that $\vecr$ is a random vector independent of $\vecz$.  This random vector consists of i.i.d.\ mean-zero coordinates such that each coordinate almost surely lies in the interval $[-\delta, \delta]$.  Consequentially, $\vecr$ is a subgaussian random vector with subgaussian norm bounded above by $C_1\delta$, for some absolute constant $C_1$.  We can now apply the Hanson--Wright inequality~\citep[Theorem 1.1]{rudelson2013hanson-wright} to the quadratic form $\vecr^\transpose\matSigma\vecr$ to conclude
\begin{align}
\prob \left[\left|\vecr^\transpose \matSigma \vecr - \mathbb{E}\left[\vecr^\transpose\matSigma\vecr\right]\right| \geq t\right] \leq 2\exp\left(-C_2\min\left\{\frac{t^2}{C_1^4\delta^4\|\matSigma\|_{\textrm{F}}^2},\frac{t}{C_1^2\delta^2\|\matSigma\|_{\textrm{op}}}\right\}\right),\label{eq:msq-after_H-W_ineq}
\end{align}
where $C_2$ is an absolute constant.  We proceed to compute $\mathbb{E}[\vecr^\transpose\matSigma\vecr]$.  This can be done as follows
\begin{align}
    \mathbb{E}[\vecr^\transpose\matSigma\vecr] &= \mathbb{E}\left[\sum_{i=1}^N\sum_{j=1}^N \vecr_i \vecr_j \matSigma_{ij}\right] \\
    &= \sum_{i=1}^N \mathbb{E}[\vecr_i \vecr_i] \matSigma_{ii} \\
    &= \mathbb{E}[\vecr_1^2] \tr \matSigma, \label{eq:msq-r-sigma-ev}
\end{align}
where in the last line we use that the coordinates of $\vecr$ are i.i.d.\ and mean-zero.

In the case of deterministic round-to-nearest $\vecq_1 = 0$ for all possible values of $\vecw_1$.  Therefore, $\vecr_1 = \vecw_1$ is a uniform variable on $[-\delta/2, \delta]$, so
\begin{align}\mathbb{E}[\vecr_1^2] = \frac{1}{\delta}\int_{-\delta/2}^{\delta/2} \vecw_1^2 d\vecw_1 = \frac{\delta^2}{12}.
\label{eq:msq-coord-2nd-moment-deterministic}
\end{align}

In the case of stochastic round-to-nearest, $\vecr_1 = \alpha_1$ by construction.  This depends on the random variable $p_1$, whose marginal distribution is the uniform distribution on $[0,1]$.  Thus, we can compute
\begin{align}
\mathbb{E}[\vecr_1^2] = \int_{0}^{1} \left(p_1(1-p_1)^2\delta^2 + (1-p_1)p_1^2\delta^2\right)dp_1 = \int_{0}^{1} p_1(1-p_1)\delta^2 dp_1 = \frac{\delta^2}{6}.\label{eq:msq-coord-2nd-moment-stochastic}
\end{align}
From \cref{eq:msq-r-sigma-ev,eq:msq-coord-2nd-moment-deterministic,eq:msq-coord-2nd-moment-stochastic}, we obtain the bound
\begin{align}
    \mathbb{E}[\vecr^\transpose\matSigma\vecr] \geq \frac{\delta^2}{12}\tr \matSigma.\label{eq:msq-r-sigma-ev-lb}
\end{align}
We now combine \cref{eq:msq-after_H-W_ineq,eq:msq-r-sigma-ev-lb} to obtain
\begin{align}
    \prob \left[\vecr^\transpose \matSigma \vecr \geq \left(\frac{1}{6}+s\right)\delta^2\tr\matSigma\right] \leq 2\exp\left(-C_2\min\left\{\frac{s^2(\tr\matSigma)^2}{C_1^4\|\matSigma\|_{\textrm{F}}^2},\frac{s\tr\matSigma}{C_1^2\|\matSigma\|_{\textrm{op}}}\right\}\right),
\end{align}
Because $\matSigma$ is a positive semi-definite matrix, we have
\begin{equation}
    \|\matA\|_{\textrm{op}} \leq \|\matA\|_{\textrm{F}} \leq \tr \matA.
\end{equation}
Therefore,
\begin{align}
    \prob \left[\vecr^\transpose \matSigma \vecr \geq \left(\frac{1}{6}+s\right)\delta^2\tr\matSigma\right] \leq 2\exp\left(-C_2\min\left\{\left(\frac{s}{C_1^2}\right)^2,\frac{s}{C_1}\right\}\right).\label{eq:msq-rSigmar-ub}
\end{align}
As $\vecr^\transpose\matSigma\vecr$ is non-negative, \cref{eq:msq-rSigmar-ub} is enough to conclude that $\vecr^\transpose\matSigma\vecr$ is subexponential with subexponential norm bounded by $C_3 \delta^2 \tr \matSigma$ for an absolute constant $C_3$.  This yields the following bound for its second moment~\citep[Proposition 2.8.1]{vershynin_hdp}
\begin{equation}
    \mathbb{E}\left[(\vecr^\transpose\matSigma\vecr)^2\right] \leq C_4 (\delta^2\tr\matSigma)^2
\end{equation}
for another absolute constant $C_4$.  We now apply the Paley-Zygmund inequality to obtain
\begin{equation}\prob\left[\vecr^\transpose\matSigma\vecr \geq \frac{\delta^2}{24}\tr\matSigma\right]\geq \frac{1}{576 C_4}.\qedhere
\end{equation}
\end{proof}

\end{document}